\newlength\myindent
\newtheorem{theorem}{Theorem}
\newtheorem{lemma}{Lemma}
\def\ps@IEEEtitlepagestyle{%
  \def\@oddhead{\mycopyrightnotice}%
  \def\@oddfoot{\hbox{}\@IEEEheaderstyle\leftmark\hfil\thepage}\relax
  \def\@evenhead{\@IEEEheaderstyle\thepage\hfil\leftmark\hbox{}}\relax
  \def\@evenfoot{}%
}
\def\mycopyrightnotice{%
  \begin{minipage}{\textwidth}
  \centering \scriptsize
  This article has been accepted for publication in the IEEE Internet of Things Journal. Copyright~\copyright~20XX IEEE.  Personal use of this material is permitted.  Permission from IEEE must be obtained for all other uses, in any current or future media, including reprinting/republishing this material for advertising or promotional purposes, creating new collective works, for resale or redistribution to servers or lists, or reuse of any copyrighted component of this work in other works.
  \end{minipage}
}
\begin{document}
\title{Instantaneous Wireless Robotic Node Localization Using Collaborative Direction of Arrival}
\author{Ehsan Latif and Ramviyas Parasuraman$^*$
\thanks{School of Computing, University of Georgia, Athens, GA 30602, USA.} 
\thanks{$^*$ Corresponding Author Email: ramviyas@uga.edu.}
}

\maketitle              

\begin{abstract}
Localizing mobile robotic nodes in indoor and GPS-denied environments is a complex problem, particularly in dynamic, unstructured scenarios where traditional cameras and LIDAR-based sensing and localization modalities may fail. Alternatively, wireless signal-based localization has been extensively studied in the literature yet primarily focuses on fingerprinting and feature-matching paradigms, requiring dedicated environment-specific offline data collection. We propose an online robot localization algorithm enabled by collaborative wireless sensor nodes to remedy these limitations. Our approach's core novelty lies in obtaining the Collaborative Direction of Arrival (CDOA) of wireless signals by exploiting the geometric features and collaboration between wireless nodes. The CDOA is combined with the Expectation Maximization (EM) and Particle Filter (PF) algorithms to calculate the Gaussian probability of the node's location with high efficiency and accuracy. The algorithm relies on RSSI-only data, making it ubiquitous to resource-constrained devices. We theoretically analyze the approach and extensively validate the proposed method's consistency, accuracy, and computational efficiency in simulations, real-world public datasets, as well as real robot demonstrations. The results validate the method's real-time computational capability and demonstrate considerably-high centimeter-level localization accuracy, outperforming relevant state-of-the-art localization approaches. 
\end{abstract}
%

%

\begin{IEEEkeywords}
Localization, Robots, Wireless Sensor Networks, Collaboration, Expectation Maximization, Particle Filter
\end{IEEEkeywords}


\section{Introduction}
\label{sec:intro}

A set of sensors, actuators, and mobile devices are connected to form an Internet of Things (IoT) system. Here, location information is critical for such operations, especially for wireless and mobile robotic nodes.
Node localization has been a challenging problem, especially in indoor environments. 
As such, Indoor localization has emerged as one of the most critical components in robotics, automation, and wireless systems. Here, one fundamental requirement is to provide an accurate and efficient localization system in a real-time (online) manner.
Furthermore, GPS-denied or dynamically changing environments pose additional challenges for mobile robot indoor localization \cite{survey}.

Sensors such as cameras, LIDAR, inertial measurement units (IMU), and their fusion have been exploited for obtaining accurate indoor localization of mobile devices \cite{canedo2016particle}. However, these technologies are expensive, non-applicable to resource-constrained devices and robots, and also suffer from various limitations, such as the requirement of proper lighting conditions in vision-based localization and structured non-dynamic surfaces for LIDAR-based perception.


\begin{figure}[t]
\centering
 \includegraphics[width=0.99\linewidth]{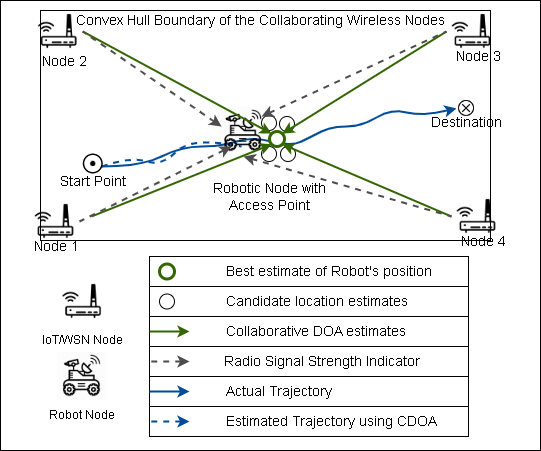}
 \caption{Overview of the proposed real-time CDOA-based localization of a robotic node following a trajectory with limited IoT/Wireless Nodes.}
 \label{fig:overview}
\end{figure}

On the other hand, wireless technologies such as Wi-Fi and Bluetooth are the most extensively utilized for indoor WLANs. The ubiquitous availability of Received Signal Strength Indicator (RSSI) measurements from Access Points (AP) or Wireless Sensor Nodes (WSNs) can be used for various objectives, including localization \cite{rizzo2021alternative,latif2022dgorl, RFID2021survey7}, multi-robot control \cite{luo2019multi,parasuraman2019consensus}, and communication optimization \cite{parasuraman2018kalman}. These advances provide opportunities to exploit the RSSI information from WSNs in aiding mobile robot localization.

Extant RSSI-based indoor positioning systems frameworks require an offline site survey to generate fingerprints and match the current real-time RSSI data to this database for positioning with a supervised machine learning algorithm (e.g., \cite{sadowski2020memoryless}). However, the fingerprinting approaches require a dedicated offline phase, in addition to the limitation of generalization, where they can be employed only for the specific environment where they are fingerprinted \cite{tao2018novel}. However, for mobile robot deployments, these limitations are not practical.

To address these gaps, we propose a novel algorithm for estimating the Collaborative Direction of Arrival (CDOA) estimated with the RSSI values obtained through collaboration among WSNs. 
The CDOA estimation is then integrated with two Bayesian framework variants for robust node localization: Expectation Maximization (EM) and Particle Filter (PF). 
See Fig.~\ref{fig:overview} for an overview of the proposed WSN collaboration-based wireless robotic node localization method.

The main contributions of this paper are outlined below.
\begin{enumerate}
   \item We propose a novel collaboration-aided mechanism for a mobile robot to collect RSSI data from the WSNs and estimate Wireless signal CDOA.
   \item We integrate the CDOA with a Bayesian framework for robot node localization. We propose two variants (Expectation Maximization and Particle Filter) to exploit the statistical EM method's accuracy and efficiency advantage the sampling-based PF method offers. 
   \item We theoretically analyze the properties of the proposed CDOA localization method in terms of localization consistency, accuracy, area coverage, scalability, and computational complexity.
    \item Through extensive experimental analysis in diverse setups enabled by numerical simulations, publicly available real-world datasets, and in-house robot hardware demonstrations, we evaluate the localization accuracy and efficiency of the proposed variants of CDOA-aided node localization. 
    \item We validate our approach by comparing with relevant non-fingerprinting methods from the recent literature such as the trilateration \cite{yang2020trilateration}, weighted centroid localization \cite{weightedCentroid2014survey}, differential RSSI \cite{podevijn2018comparison}, improved RSSI-based localization \cite{Xue2017improvedrssi}, smooth Particle Filter with Extended Kalman Filter \cite{zafari2018baysianfiltering}, and sparse Bayesian learning applied over the Direction of Arrival \cite{wang2019assistant} approaches.
    \item We open source all our codes and datasets (native Python implementations for the IoT and wireless sensor network community, as well as a ROS \cite{quigley2009ros} package for the robotics community) at \url{https://github.com/herolab-uga/cdoa-localization}. We believe this will enable the reproducibility and extension of our approach by the research community.
\end{enumerate}

The core novelty of our approach lies in that we employ a CDOA metric obtained through cooperative communication between the WSNs and the mobile robot instead of relying on the RSSI metric directly (as used in relevant methods in the literature). Further, we integrate and extensively evaluate the CDOA metric with Bayesian frameworks for robot node localization.
Our proposed methods achieve superior accuracy, efficiency, and robust localization performance through these novelties while enabling real-time efficiency compared to several state-of-the-art solutions.
A video demonstrating the CDOA approach implemented on real robots can be found at \url{https://www.youtube.com/watch?v=jVg2hzouO9E}.

\section{Related Work}
\label{sec:relatedwork}

According to a recent survey on Wi-Fi-based indoor positioning, \cite{liu2020survery4}, there are two categories of wireless localization solutions: \textit{Model-based and Survey-based}. Model-based approaches include trilateration using RSSI, triangulation using DOA, and Weighted Centroid using distance. Recent works include variants thereof, such as the filtered trilateration \cite{yang2020trilateration}, differential RSSI-based least squares estimation \cite{podevijn2018comparison}, and Expectation Maximization \cite{pajovic2015unsupervised}.
While the survey-based approaches provide high accuracy based on the precise fingerprints collected from the same environment through a dedicated offline process, they also come with high computation costs of prediction algorithms like the K-Nearest Neighbors \cite{subedi2020survey5}. Accordingly, we focus on model-based solutions.

In model-based approaches, multilateration and triangulation are the fundamental methods to predict the position of a wireless device (e.g., a mobile robot) using RSSI captured from multiple anchors/APs \cite{passafiume2016triangulation}. However, these methods suffer from co-linearity, ambiguous positioning, non-intersecting circles, etc. 
A recent survey \cite{zafari2019survey1} on model-based techniques confirms that the balance of accuracy and computing complexity is absent in the literature. Also, different variants of trilateration/triangulation can have different localization accuracy, resulting in inconsistency in its application. The weighted centroid method is less accurate for non-line-of-sight conditions, limiting its applicability. Further, some methods convert the raw RSSI measurements into distance estimates to use in multilateration algorithms, which suffer from the dependency on wireless channel parameters of the environment for RSSI to distance conversion.

\textcolor{black}{A novel trilateration algorithm is put forth by Yang et al. \cite{yang2020novel} for RSSI-based indoor localization of a target using the geometric relationships between transmitters and receivers. The precision of localization can be considerably impacted by environmental changes, such as obstructions and multipath propagation, because trilateration is sensitive to these. This approach also requires considerable calibration and exact distance estimation, which might be difficult in dynamic situations. A Gaussian Filtered RSSI-based Indoor Localization method utilizing Bootstrap filtering is presented by Wang et al. \cite{wang2021gaussian}. This technique uses particle filtering to determine the location of a target within a WLAN. Although it is more resistant to environmental noise, this method still relies on raw RSSI measurements, which are prone to interference, multipath propagation, and signal attenuation.}

\textcolor{black}{Pinto et al. \cite{pinto2021robust} use K-means clustering and Bayesian estimation to create a reliable RSSI-based indoor positioning system. This method seeks to increase the accuracy of localization by integrating unsupervised learning with probabilistic estimates. It is nonetheless susceptible to the drawbacks of RSSI measurements, such as susceptibility to environmental changes and the requirement for significant calibration. Bayesian filtering method also presented by Mackey et al.\cite{mackey2020improving} for enhancing BLE beacon proximity estimation accuracy. This technique improves the performance of BLE beacons, but is still vulnerable to interference from other wireless devices and may have decreased accuracy in NLOS scenarios and dynamic surroundings. Another method for estimating the path-loss exponent using Bayesian filtering is presented by Wojcicki et al. \cite{wojcicki2021estimation}. This method is intended to describe the propagation of signals in various contexts, increasing the precision of localization. The quality of the RSSI measurements, which the surroundings and signal attenuation can impact, is crucial to the path-loss exponent estimation's accuracy.}

\textcolor{black}{Combining data from many sources and using probabilistic techniques, the proposed CDOA methods minimize the drawbacks of the aforementioned methods. While CDOA-EM uses locations as samples to fill a grid and determine robot location using Gaussian probability, CDOA-PF increases robustness in complicated situations by repeatedly updating particles representing potential positions. Both strategies overcome the shortcomings of conventional RSSI-based techniques and offer more precise and dependable localization in the presence of NLOS, multipath propagation, and environmental changes.}

As an alternative to the RSSI metric, researchers have proposed the use of the Channel State Information (CSI) metric for robot localization systems \cite{jadhav2021toolbox,song2017csi,wang2016csi}. However, most CSI-based techniques involve extensive offline fingerprinting processes to improve accuracy. Moreover, as with RSSI-based metrics, the CSI metric is limited to very few radios and cannot be exploited for ubiquitous applications. DOA (or Angle of Arrival) based methods achieve higher localization accuracy than RSSI-based solutions. For instance, in research \cite{arbula2020sensors}, the authors used a sensor node equipped with Infrared light arrays to estimate the DOA of a mobile robot, which was used to achieve indoor localization with meter-level accuracy. Cooperatively localizing target nodes using multiple reference nodes with known locations has been explored. For instance, the authors in \cite{hassani2015cooperative} provided a distributed method for cooperatively estimating the DOA of an acoustic sensor network. In contrast, the authors in \cite{xu2015cooperative} used cooperative DOA from Ultra Wideband (UWB) radios to locate target nodes using many reference nodes. 

{\color{black}
UWB-based indoor localization systems, while highly promising, still face several limitations. The presence of multipath propagation and non-line-of-sight (NLOS) conditions can significantly affect the positioning accuracy \cite{alarifi2016ultra}. Yang et al. \cite{yang2022robust} proposed a UWB-based indoor localization with fewer nodes and utilized deep neural networking to avoid the effect of non-line-of-site; however, this solution requires offline data training and sampling overhead, which makes the system restricted to the trained environment. Additionally, deploying UWB anchors can be challenging in real-world environments due to their need for precise installation \cite{ridolfi2021uwb}. Further, the power consumption of UWB devices and their susceptibility to interference from other wireless systems can negatively impact their performance and scalability \cite{zafari2019survey1}. Integration of UWB-based systems with other sensing modalities can be difficult, as the fusion of data from different sources may be subject to noise and uncertainty \cite{wang2021high}.

A wireless sensor network with a few WSNs can overcome the limitations of UWB-based indoor localization  by leveraging cooperative sensor modalities to provide robust and accurate localization in complex environments. By fusing data from diverse sensing sources, WSNs can mitigate the effects of multipath propagation, NLOS conditions, and interference, enhancing the overall system performance \cite{celaya2020radio}.
Furthermore, commercial UWB-based localization solutions provide accuracy of up to 10cm and connection stability at the cost of high computation power and expensive anchor node solution, which makes it impractical for swarm robots ~\cite{starks2023heroswarm} with limited computational power and can only possess wireless connectivity. The proposed solution provides highly scalable, computationally efficient, and  high localization accuracy for small to large-scale multi-robot systems.
}

Estimating a mobile node's position using only a few reference nodes with high accuracy and efficiency is achievable in wireless sensor networks. Wang et al. \cite{wang2019assistant} proposed sparse Bayesian learning for robust DOA estimation with only a few base station nodes. But, their implementation assumed multiple antennas at each base station, realizing an EM-based DOA estimation and eventual vehicle localization using the DOA triangulation.
\textcolor{black}{ Wang's proposed solution is computationally expensive and requires high-end base stations, which makes it impractical for small robots operating indoor environment.} On the contrary, in our work, we assume typical Wi-Fi sensors without having access to multi-antenna data, allowing ubiquitous integration with existing wireless sensor networks/IoT systems and computationally efficient online CDOA-based indoor localization.


Therefore, we propose a CDOA estimation using IoT or wireless nodes and fuse it with Bayesian approaches for high-accuracy localization of mobile robotic nodes. 
\textit{We depart from the literature in two different ways: 1) we use a collaborative mechanism between the WSNs to obtain the CDOA of wireless signals; 2) we estimate Gaussian probability on the CDOA estimates, adopting EM and PF Bayesian frameworks}. The localization system can be applied independently of the robot's motion model or combined with the robot's odometry, if available, to improve accuracy. Moreover, our approach uses only a few reference nodes and works on resource-constrained robotic nodes in real time. 
Our proposed approach is advantageous by reducing computational complexity without embedding external hardware and using bearing-only information (aided by the cooperative RSSI measurements). It achieves high accuracy even in the presence of signal noise. This way, our method balances the efficiency and accuracy of quick online operation without fingerprinting dependence.
While localization of static Access Points has been demonstrated using DOA \cite{parashar2020particle}, this is the first work that uses CDOA for robotic node localization demonstrated in real-world implementations.


\section{Problem Statement}
\label{sec:problem}
We look at the problem of a robot node localizing itself against its surroundings.
Here, a limited (smaller) number of WSNs or IoT nodes are distributed in the environment, and the mobile robot is mounted with an AP, which nearby WSNs can sense.
\textcolor{black}{The robot can operate within the sensing range of WSNs, which is assumed to be 40m; the robot is not restricted to be in the boundary of WSNs but confined to be within the range (e.g., it could be outside the convex of boundary).}

WSNs are assumed to be static, and their exact position is known to the robot for gradient calculation in the global frame. Furthermore, the robot is restricted to moving within the sensing range of connected WSNs.
The WSNs measure the RSSI values coming from the AP and communicate this information to the robot cooperatively (assuming the measurements and shared data are reasonably time-synchronized using NTP-like protocols). 
\textcolor{black}{Robot will use RSSI values to calculate the gradient and convert it into the CDOA with respect to the position of WSNs.}
The robot \(R\) keeps track of the trajectory along with the CDOA measurements as the tuple: \( m_l = \{ x_l , y_l , CDOA_l\}\), where \((x_l , y_l)\) is the location of the robot at location \(l\). 

The \textcolor{black}{objective} is to find the best estimate of the robot's location \textcolor{black}{\((x_{l}^*, y_{l}^* )\)}, which maximizes the probability of observing the measurement tuples when the robot is at the estimated location \(P(x_{l}, y_{l} \mid m_l, m_{l-1}, . . . , m_{l-M})\), where \textcolor{black}{$m_l$ is the sample for position $l$} and \(M\) is the number of previous samples considered along the completed trajectory so far, given that we employ an arbitrary method to estimate the CDOA. \textcolor{black}{Table~\ref{tab:notations} lists the key symbols and notations used in the paper.}

\begin{table}[t]
\begin{center}
\caption{Notations and their descriptions}
\label{tab:notations}
\vspace{-4mm}
\resizebox{\linewidth}{!}{
\color{black}
\begin{tabular}{|l|l|}
\hline
\textbf{Notation} & \textbf{Description} \\ \hline
$S_i$ & Measured RSSI at a wireless node $N_i$ \\ \hline
$\vec{g} = [g_x,g_y]$ & Gradient of the RSSI's signal strength \\ \hline
$(x_c,y_c)$ & Centroid position of a rectangular workspace \\ \hline
$\Delta_{X}$, $\Delta_{Y}$ & Distance between the WSNs along the x and y axes \\ \hline
$CDOA_l$ & CDOA of the signal at a position along the robot's path $l$ \\ \hline
$P_r$, $R_r$ & Initial and re-sampled list of particles in PF \\ \hline
$\omega$ & Weights associated with each sample \\ \hline 
$x_t$ & State at time $t$ \\ \hline
$E_r$ & List of grid positions in the EM algorithm \\ \hline
$w_{E_r}(x_t)$ & Gaussian Probability for each grid position in $E_r$ at time $t$ \\ \hline
$M$ & Number of previous samples considered \\ \hline
$\sigma$ & Standard deviation of error for all previous samples \\ \hline
$err_l^{k}$ & CDOA error for sample $k$ \\ \hline
$w_i$ & Weight of each particle \\ \hline
$w_i^*(q_i)$ & Normalized weight of each particle \\ \hline
\end{tabular}
}
\end{center}
\end{table}

\color{black}

\section{Proposed CDOA Approach}
\label{sec:approach}

Cooperative localization can be accomplished with a network of wireless nodes, where each node can sense the signal strength of the other node in the network. 
Our approach consists of two units: 1) we propose a CDOA estimation scheme from RSSI measurements with an assumption of the geometric model of the AP/WSN distribution in the environment, which is typical in the literature; 2) our solution deploys EM and PF-based localization of a mobile robot node using the CDOA.

In principle, we need at least three WSNs that form at least two noncollinear segments between them to measure a valid gradient inside the boundary created by the WSNs (\cite{twigg2012rss}). Having a higher number of WSNs will increase the robustness of the solution. 
It is possible to extend this setup where a mesh network is available with several known and unknown wireless nodes, but we limit our scope and experiments in this paper with four WSNs deployed at the corners of a robotic node workspace boundary (see Fig.~\ref{fig:overview}).

\textcolor{black}{The proposed method has a basis in WSNs collaboration to measure RSSI collaboratively and calculate CDOA. Alg.~\ref{alg:cdoa} provides an algorithmic pseudo-code of the CDOA estimation of the mobile robot using the surrounding WSNs.}
The first part of the Alg.~\ref{alg:cdoa} lays out the wireless network collaboration for determining CDOA at the robot.
All these computations are performed by the moving \textcolor{black}{AP-mounted robot}, which runs a centralized service and receives the RSSI information from all connected nodes that sense wireless signals independently in a synchronized manner.

\textcolor{black}{For collaborative measurement of RSSI on WSNs connected to an access point installed on a robot, time synchronization is essential. These nodes can synchronize their clocks within the \textit{time window} using the Network Time Protocol (NTP), ensuring precise and consistent RSSI readings throughout the whole network \cite{ranganathan2010time}. Using the time window idea, sensor nodes can send and receive RSSI data at predetermined intervals, effectively controlling communication and minimizing the possibility of collisions while enabling the system to coherently process and evaluate the obtained data \cite{fang2021trust}.}

For the algorithm to work, we need at least three spatially-distributed WSNs in the network, and the robot needs to be within the polygonal boundary of the WSNs.

\begin{algorithm}[t]
At every wireless sensor node\;
\Indp
$S_i \leftarrow []$, list for all RSSI values for node $i$\;
sample RSSI $\rightarrow S_i$\;
Initialize the time window to $T$\;
   \While{time window is \textit{open}}{
        $S_i \rightarrow$ average all RSSI received\;  
    }
Publish $S_i$ to the ROS network\;
\Indm
At the robot node\;
\Indp
 \For{every wireless node $i \in N={1,2,..,4}$ in WSNs}{
        Receive RSSI ($S_i$) locally\;
    }
Calculate RSSI gradients in Eq.~\eqref{eqn:gradient} using $S_i \in N$ \;
Calculate CDOA at the robot using Eq.~\eqref{eqn:doa}\;
\Indm
  \caption{CDOA: WSN Collaboration-aided DOA Estimation at the Mobile Robot}
  \label{alg:cdoa}
\end{algorithm}

RSSI can be modeled as a vector with two components, and the gradient concerning the center of the robot can be represented as \(\vec{g} =[g_x,g_y]\). 
One of the primary advantages of the central finite difference method is that it provides gradient estimation based on the received signal strength from geometrically oriented wireless nodes. After an appropriate gradient estimation, a receiver node (moving robot) can estimate the direction of arrival of signals based on the reference position for position estimation using EM and PF contrivance. 

In our current implementation, the CDOA of the mobile robot within the network is obtained from the geometric rule described in the central finite difference method \cite{parasuraman2013spatial}.
For a rectangular configured networked infrastructure with centroid position $(x_c,y_c)$, \textcolor{black}{refer to Fig.~\ref{fig:overview} where the RSSI value of Node 1 is $S_1$ and so on,} the RSSI gradient is calculated as:
\begin{eqnarray}
\begin{aligned}
    g_x =  \frac{S_3 - S_2}{2\Delta_{X}} + \frac{S_4 - S_1}{2\Delta_{X}} \; ; \; 
    g_y = \frac{S_2 - S_1}{2\Delta_{Y}} + \frac{S_3 - S_4}{2\Delta_{Y}}
    \label{eqn:gradient}
\end{aligned}
\end{eqnarray}
Here, \(\Delta_{X}\) is the distance between the wireless sensor's antennas along the x-axis, and \(\Delta_{Y}\) is the distance between the wireless sensor's antennas along the y-axis. 

For an arbitrary number and positioning of wireless nodes, the gradient calculation can be generalized as in \cite{parasuraman2014multi}.
\begin{eqnarray}
\begin{aligned}
\color{black}
    g_x =  \frac{S_{x_c+\lambda,y_c-\delta} - S_{x_c-\lambda,y_c-\delta}}{2\Delta_{X}} + \frac{S_{x_c+\lambda,y_c+\delta} - S_{x_c-\lambda,y_c+\delta}}{2\Delta_{X}} \\
\color{black}
    g_y = \frac{S_{x_c+\lambda,y_c-\delta} - S_{x_c+\lambda,y_c+\delta}}{2\Delta_{Y}} + \frac{S_{x_c-\lambda,y_c-\delta} - S_{x_c-\lambda,y_c+\delta}}{2\Delta_{Y}}
    \label{eqn:gradient-general}
\end{aligned}
\end{eqnarray}
Here, $\lambda$ = 0.5\(\Delta_{X}\),  $\delta$ = 0.5\(\Delta_{Y}\), and $S_{x_c,y_c}$ is the RSS value from a node measured at the current path location $(x_c,y_c)$.

We then calculate the CDOA from the gradients calculated using Eq.~\eqref{eqn:gradient-general}.
\begin{equation}
    CDOA_l=\arctan(\frac{g_y}{g_x})
    \label{eqn:doa}
\end{equation}
The formula provides the CDOA of the wireless signal at a position along the path \(l\) using the RSS gradient.
We can then suppress the noise of the calculated CDOA by using the exponentially weighted moving average.

We employ a Gaussian probability model on the wireless signal CDOA estimates to calculate the weights of each random particle in the PF. 
Similar to the work in \cite{li2015modified} that uses acoustic signals, this probabilistic model will weigh the quality of signals sensed by each node from \(N\) and ultimately produce an accurate robot location estimate through the PF.

The absolute error between Actual CDOA for all wireless sensors at a potential candidate position \(l\) of mobile robot with coordinates \((x_c, y_c)\) to the perceived  ${CDOA}$ values for each sensor calculated for each particle.
Later, we use the Gaussian probability formula (similar to \cite{parashar2020particle}) on this error to calculate the probability of the \(i_{th}\) candidate location of the particle \( q_i = (x_i, y_i, w_i), i \in (1, ..n)\), where \(n\) is the number of samples in the PF spread in the bounded region with the resolution of $\Re$ and \(w_i\) is the weight of each particle calculated over a set of previous path samples as:
\begin{equation}
    P_l(q_i) = \prod_{k=0}^{M-1} \bigg[\frac{1}{\sigma\sqrt{2\pi}}e^{-\frac{(err_l^{k})^2}{2\sigma^2}}\bigg] ,
    \label{eqn:probability}
\end{equation}
\textcolor{black}{where $ P_l(q_i)$ is the probability density function for the position $l$ of candidate particle $i$ in PF, $M$ is the number of the previous samples considered, $\sigma$ is the standard deviation of error for all previous samples, and $(err_l^{k})^2$ is the CDOA error for sample $k$. Eq.~\eqref{eqn:probability} provide the probability for particle $q_i$ considering $M-1$ previous samples.}

There is an intrinsic angular inaccuracy in each CDOA degree that is analyzed. \(\sigma\) represents the fluctuation (deviation) of this error, which is anticipated to be known because we know the correctness of the technique used to assess the CDOA. We use the product of the Gauss likelihood of CDOA error over \(M - 1\) prior robot positions (imitating geographically scattered samples) so that the sifted CDOA from earlier path locations can be used similarly as readings from many sensors. 

Next, we discuss how the CDOA estimation is integrated with a probabilistic framework to achieve localization using the DOA information. 

\subsection{CDOA-PF}
The CDOA probability from Eq.~\eqref{eqn:probability} is used to calculate the weights of the particles in the PF, which is then employed in the resampling procedure in the next PF step (iteration).
\begin{equation}
    w_i  \propto P_l(q_i)
    \label{eqn:weights}
\end{equation}
The EM and PF provide initial hypotheses with a uniform sampling of probable robot locations across the environment using a constraint around the present robot location. The Gaussian probability is determined for each particle, the signal source. The particles are subsequently given weights that are proportional to their likelihood, and the weights \(w_i\)
are normalized as $w_i^*(q_i)=\frac{w_i(q_i)}{\sum_{i=0}^{n-1}w_i(q_i)}$.

This normalized weight determines the likelihood of regenerating a particle in the next iteration. The particle with the highest weight (softmax) best gauges the robot's location. This process is repeated, and the particles eventually converge on the location estimates. It is worth noting that the PF is iterated for each new estimation tuple. 
{Alg.~\ref{alg:pf-cdoa} depicts a pseudo-code of particle filtering to estimate the location from the CDOA efficiently. The CDOA-PF algorithm combines a Particle Filter technique with a Cross Difference of Arrival. The algorithm incorporates transition models and CDOA computations and iteratively updates particles that reflect the robot's potential positions. The algorithm calculates the robot's position by resampling particles according to their associated probabilities and doing so until the end of the robot's trajectory.}

\subsection{CDOA-EM}
\textbf{Expectation Maximization (EM)} is a grid-based localization that uses an explicit, discrete representation for the probability of all positions in the state space. We represent the environment by finite discrete state spaces (Grids). The algorithm updates the probability of each state of the entire space at each iteration. Use a fixed decomposition grid by discretizing each CDOA: \((x, y, \theta)\). For each location \(x_i = [x,y,\theta]\) in the configuration space: determine probability \(P(x_i)\) of the robot being in that state. Then, it chooses the state with the highest probability. 
This approach resembles the EM method in \cite{measRSS}.
{Alg.~\ref{alg:em-cdoa} depicts the procedure of the EM approach that can be used to estimate the location from the CDOA efficiently. The CDOA-EM algorithm employs Expectation Maximization and Cross Difference of Arrival. It iteratively updates the robot's position using transition models and CDOA calculations, populating a grid with positions as samples. The program determines the maximum probability grid position by converting CDOA into Gaussian probability and predicting the robot's location until the end of the trajectory.}

\begin{figure}[t]
\centering
 \includegraphics[width=0.99\linewidth]{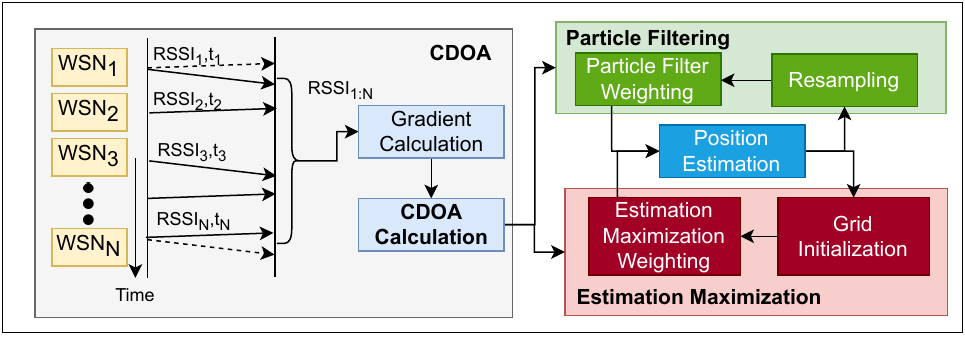}
 \caption{\textcolor{black}{System architecture of the CDOA-based robotic node localization approach using particle filter and estimation maximization.}}
 \label{fig:architecture}
\end{figure}

\subsection{Summary of the system architecture}
\textcolor{black}{Fig.~\ref{fig:architecture} delineates the system architecture of proposed CDOA-based indoor localization in which WSNs collaboration is shown in \textit{CDOA} block where all nodes share RSSI and perform gradient calculation using Eq.~\eqref{eqn:gradient}, which was further converted into CDOA using Eq.~\eqref{eqn:doa} as mentioned in Alg.~\ref{alg:cdoa}. CDOA with robots' position estimation is used for PF resampling as discussed in Alg.~\ref{alg:pf-cdoa}, and similarly, for estimation maximization as mentioned in Alg.~\ref{alg:em-cdoa}. These estimates will further be used for state estimation of the robot as the robot moves along a trajectory.}

\begin{algorithm}[t]
  $P_r$ = [] \% Initial list of particles in the PF \;
  \While{end of trajectory}{
  draw sample \(x_t\) from transition model \(P(X_{t}|x_{t-1},z_{t-1})\)\;
   CDOA calculation using Algorithm \ref{alg:cdoa}\;

   $\omega$ = [] \% weights\;
   \For{\(x_t\) in $P_r$}{
   add \(P(z_t|x_t)\) to $\omega$\;
   }
   $R_r$ = [] \% re-sampling\;
   \For{i=1 to n (number of particles)}{
   Choose p = $P_r$[i] of probability $w_i(x_i) \in \omega$\;
   add p to $R_r$\;
    \(x_l^* = x_i \in w_i(x_i)\) is \(max(w_i)\)\;
   }
   $P_r = R_r$\;
  }
  \caption{CDOA-PF: Particle Filter Over CDOA for Mobile Robot Localization}
  \label{alg:pf-cdoa}
\end{algorithm}

\begin{algorithm}[t]
  Grid dimensions and resolution is given\;
  $E_r$ = [] \% Populate full-resolution grid positions as samples for EM\;
  \While{end of trajectory}{
  draw sample \(x_t\) from transition model \(P(X_{t}|x_{t-1},z_{t-1})\)\;
   CDOA at $x_t$ calculation using Algorithm \ref{alg:cdoa}\;
    $\omega$ = [] \% weights\;
   \For{each grid position in $E_R$}{
   Convert CDOA into $w_{E_r}(x_t) \in \omega$ Gaussian Probability using (Eq.~\eqref{eqn:probability}\ for $E_r$[t]\;
    \(x_l^* = x_t \in w_{E_r}(x_t)\) is \(max(w_{E_R})\)\;
   }
  }
  \caption{CDOA-EM: Expectation Maximization Over CDOA for Mobile Robot Localization}
  \label{alg:em-cdoa}
\end{algorithm}

 
\section{Theoretical Analysis}
\textbf{Assumption 1} Given the locations of static wireless nodes $N$, their range $R$ in the wireless sensor network, and the position of the robot node $x$; the following observation can be made about the CDOA estimation probability: CDOA is independent of the previous observations, i.e.,
\begin{equation}
    P(CDOA_t| CDOA_{t-1},N,R,x) = P( CDOA_{t}|N,R,x)
\end{equation}

\textbf{Assumption 2} Given $X$ as a set of samples in PF and $\Re$ as the resolution of spread. We assume that samples spread randomly in the space with the given resolution spread, greater than the centroid of converged samples in PF. i.e.,
\begin{equation}
    \frac{\sum_{i=0}^{n-1}(X_i)}{n}\leq \Re
\end{equation}

\begin{lemma} 
The error in the location estimation depends upon the cumulative noise percentage $\mathcal{N\%}$ of RSSI from each wireless sensor node in wireless sensor network.
\label{lemma:error}
\end{lemma}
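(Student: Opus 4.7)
The plan is to trace the noise propagation through the three processing stages of the pipeline: the RSSI measurements feed the gradient (Eq.~\eqref{eqn:gradient-general}); the gradient feeds the CDOA (Eq.~\eqref{eqn:doa}); and the CDOA feeds the Bayesian peak that produces $x_l^{*}$ (Eq.~\eqref{eqn:probability}). The goal is to chain three perturbation bounds so that the final location error is controlled by a monotone function of the cumulative per-node noise percentages $\mathcal{N\%}_i$, and hence vanishes when the noise does.

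First, I would set up a perturbation model $\tilde{S}_i = S_i + \epsilon_i$ with $|\epsilon_i| \leq \mathcal{N\%}_i\,|S_i|$. Substituting into Eq.~\eqref{eqn:gradient-general} and using linearity of the central-difference operator,
$$\Delta g_x = \frac{(\epsilon_3-\epsilon_2)+(\epsilon_4-\epsilon_1)}{2\Delta_X}, \qquad \Delta g_y = \frac{(\epsilon_2-\epsilon_1)+(\epsilon_3-\epsilon_4)}{2\Delta_Y},$$
so $|\Delta g_x|, |\Delta g_y| \leq C_1 \sum_i \mathcal{N\%}_i$ for a constant $C_1$ depending only on the node spacings and nominal RSSI magnitudes. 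Propagating to the CDOA via a first-order expansion of $\arctan(g_y/g_x)$ yields
$$|\Delta\, CDOA| \;\lesssim\; \frac{|g_x\,\Delta g_y - g_y\,\Delta g_x|}{g_x^{2}+g_y^{2}} \;\leq\; \frac{C_1\sum_i \mathcal{N\%}_i}{\sqrt{g_x^{2}+g_y^{2}}},$$
which is the second link of the chain.

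Third, I would relate the CDOA perturbation to the peak of the Gaussian product in Eq.~\eqref{eqn:probability}. Each noisy CDOA shifts the corresponding per-sample error $err_l^{k}$ by at most $|\Delta\, CDOA|$, shifting the means of the Gaussian factors accordingly. Since the argmax of a product of Gaussians is a $C^{1}$ function of those means (by the implicit function theorem applied to $\nabla_{q} \log P_l(q) = 0$, with the Hessian nonsingular when the likelihood is locally concave), one obtains $\|x_l^{*} - x_l^{\text{true}}\| \leq K \sum_i \mathcal{N\%}_i$ for some constant $K$ determined by the WSN geometry, the nominal gradient magnitude, $\sigma$, and the curvature of the likelihood at the peak. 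Combined with Assumption~1 (independence across time), this cumulative bound is the claim of the lemma.

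The main obstacle is the third step: controlling the constant $K$ rigorously. It requires two nondegeneracy conditions that should be made explicit, namely (i) $g_x^{2}+g_y^{2}$ bounded away from zero so that the $\arctan$ linearization is valid (the robot is not at a gradient-null pose), and (ii) the $M{-}1$ previous samples yield CDOA rays that are not pairwise collinear, so that the Hessian of $\log P_l$ is invertible at the true location. Because the lemma only asserts qualitative dependence, the chain of inequalities already proves the statement; a quantitative tightening would additionally require these geometric assumptions to be spelled out and the constants $C_1, K$ expressed in terms of the WSN layout.
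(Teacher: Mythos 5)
Your three-stage decomposition (RSSI noise $\rightarrow$ gradient noise $\rightarrow$ CDOA noise $\rightarrow$ location error) is exactly the route the paper takes, and your first two links match its proof almost verbatim: the paper likewise introduces per-node noise terms $\eta_{x,i}$, $\eta_{i,y}$, observes that by linearity of Eq.~\eqref{eqn:gradient} the uncertainties in $g_x$ and $g_y$ are the corresponding sums of these terms, and concludes that the computed CDOA inherits the cumulative error. Where you genuinely diverge is the final link. The paper disposes of it qualitatively in one sentence --- the location estimate is the soft-max over particle weights, the weights depend on the CDOA error, hence the location error does too --- and then outsources the quantitative content to an external bound (Penna et al.), namely that the location-error variance scales linearly with the DOA variance and quadratically with target distance. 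You instead derive this link internally: a first-order expansion of $\arctan(g_y/g_x)$ and an implicit-function-theorem argument on the argmax of the Gaussian product in Eq.~\eqref{eqn:probability}. Your version buys a self-contained, Lipschitz-type bound $\|x_l^{*}-x_l^{\mathrm{true}}\|\leq K\sum_i \mathcal{N\%}_i$ and, more importantly, surfaces the two nondegeneracy conditions the paper never states (gradient magnitude bounded away from zero so the $\arctan$ linearization is valid, and non-collinear CDOA rays across the $M-1$ samples so the log-likelihood Hessian is invertible); the paper's version buys brevity and the sharper distance-dependence of the error via the cited bound. Since the lemma only asserts qualitative dependence, both arguments suffice, but yours is the more rigorous instantiation of the same idea.
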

\begin{proof}
First, We prove the relation between the error in CDOA estimation of a robot at a candidate position $l$ with cumulative noise ~$\mathcal {N\%}$ in RSSI from WSNs.

Let $\eta_x = {\eta_{x,1},\eta_{x,2}, ..., \eta_{x,j},}$ and $\eta_y = {\eta_{1,y},\eta_{2,y}, ..., \eta_{k,y},}$ are noise values of nodes in the horizontal and vertical axis in wireless sensor network respectively. Based on Eq.~\eqref{eqn:gradient}, the uncertainty values for $g_x$ and $g_y$ are $\sum_{i=1}^{j}(\eta_{x,i})$ and, $\sum_{i=1}^{k}(\eta_{i,y})$ respectively. Hence, calculated CDOA at position $l$ has a cumulative percentage error of $g_x$ and $g_y$.
Next, we note that location estimation is the soft-max of weights of $n$ samples in PF; hence the error in location estimation depends upon the cumulative percentage error of maximum weight in PF, which is further dependent upon the error of CDOA estimation using RSSI in wireless sensor network.

Moreover, as shown in \cite{penna2011bounds}, the location estimation error variance estimated using the DOA metric can scale linearly with the DOA estimation variance and quadratically with the target distance (see Lemma~\ref{lemma:coverage}).
\end{proof}

\begin{theorem}
(\textbf{Convergence}) The CDOA-based localization output will converge to the actual position of the robot in finite iterations. Let $x_i=\tau x_{i-1} + \epsilon$, where $i$ is the current iteration, $\tau \in X$ and $\epsilon \approx \Re$ (resolution). We can claim that the infinite sequence, ${x_i}^{\infty}_{i=1}$ has an approximate solution in a finite number of iterations.
\label{theorem:convergence}
\end{theorem}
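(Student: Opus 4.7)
The plan is to cast the update rule $x_i = \tau x_{i-1} + \epsilon$ as a Banach-style contraction on the workspace and then bound the number of iterations needed to fall within the grid resolution $\Re$ of the true robot position. First, I would interpret the recurrence probabilistically: by Assumption~1, the CDOA observation at iteration $i$ is conditionally independent of earlier observations given the current state, so each PF/EM update is a memoryless refinement. Under Assumption~2, the centroid of the resampled particle cloud differs from the prior centroid by at most $\Re$, letting me treat $\epsilon$ as a bounded perturbation with $\|\epsilon\|\le\Re$.

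Next I would argue that the Gaussian weighting of Eq.~\eqref{eqn:probability} contracts the centroid estimate. Because each new weight multiplies a Gaussian kernel of width $\sigma$ centered at the likelihood mode (the true position), the weighted mean of the updated particles moves toward that mode by a factor $\tau<1$ that depends on $\sigma$ and on the particle spread. Telescoping the recurrence $x_i - x^* = \tau(x_{i-1} - x^*)$ about the Banach fixed point $x^* = \epsilon/(1-\tau)$ gives the explicit bound $\|x_i-x^*\|\le \tau^{i}\|x_0-x^*\|+\Re/(1-\tau)$, so the choice $i\ge\lceil \log_{1/\tau}(\|x_0-x^*\|/\Re) \rceil$ yields $\|x_i-x^*\|=\mathcal{O}(\Re)$, which is precisely the sense of an ``approximate solution in a finite number of iterations'' promised by the statement. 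By Lemma~\ref{lemma:error}, $x^*$ coincides with the true robot location up to the cumulative RSSI noise floor $\mathcal{N\%}$, so the same finite-iteration bound governs convergence to the physical position as well.

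The main obstacle will be establishing $\tau<1$ rigorously for both variants. For CDOA-PF, I would show that the resampling step shrinks the weighted particle variance by analyzing the curvature of the log-likelihood at the true position and invoking the standard Gaussian-product identity: a product of $M$ likelihoods of variance $\sigma^2$ yields a posterior of variance $\mathcal{O}(\sigma^2/M)$, so $\tau$ decays at worst as $1/\sqrt{M}$ in the sample window. For CDOA-EM, I would instead bound $\tau$ by the ratio of probability mass concentrated inside a resolution-$\Re$ neighborhood of $x^*$ to the total grid mass, which is strictly less than one whenever the CDOA measurement is informative, i.e., whenever $\sigma$ is finite and the RSSI gradient is non-degenerate. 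Since the workspace is compact and $\tau<1$, the sequence $\{x_i\}_{i=1}^{\infty}$ is Cauchy, so it converges, and the explicit iteration count above certifies that the approximate solution is reached in finitely many steps, closing the argument.
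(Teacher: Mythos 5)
Your route is genuinely different from the paper's. The paper's own proof sets up no contraction at all: it asserts that the PF iteration $g_k$ has a limit $g$ with a fixed point $x^*$, notes that the centroid $c$ of the converged particles minimizes $\sum_{i}|x_i-c|$, and then invokes Lemma~\ref{lemma:error} to attach the $\mathcal{N}\%$ uncertainty --- there is no rate, no iteration count, and no argument for why the limit exists. Your Banach-style telescoping with the explicit bound $i\ge\lceil\log_{1/\tau}(\|x_0-x^*\|/\Re)\rceil$ is a more principled skeleton and is the only one of the two arguments that would actually deliver the ``finite number of iterations'' promised by the statement, \emph{provided} the contraction constant can be established.

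That proviso is where the gap sits, and the two mechanisms you sketch for $\tau<1$ do not close it. For CDOA-PF, the product in Eq.~\eqref{eqn:probability} runs over a fixed window of $M$ past path samples that is reused at every filter step; these are not $M$ fresh independent likelihoods per iteration, so the Gaussian-product identity gives a one-time posterior width of order $\sigma^2/M$, not a per-iteration shrinkage factor. The per-iteration dynamics come from resampling, which is stochastic and does not move the weighted centroid toward the likelihood mode by any deterministic factor (under particle impoverishment it can just as well collapse onto a wrong mode). Moreover the likelihood is Gaussian in the \emph{angular} CDOA error, so the induced positional posterior is a wedge whose width grows with distance from the nodes --- precisely the quadratic scaling with target distance that the paper's own Lemma~\ref{lemma:error} cites --- so no workspace-uniform $\tau<1$ is available without restricting the geometry. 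For CDOA-EM, Alg.~\ref{alg:em-cdoa} performs an independent argmax over the full grid at each time step, so there is no recurrence $x_i=\tau x_{i-1}+\epsilon$ to contract and the fixed-point framing does not describe that algorithm at all. Finally, you silently reinterpret $\tau\in X$ (a particle, per Assumption~2) as a scalar in $(0,1)$; that is almost certainly the intended reading, but it is a repair of the theorem statement rather than a consequence of it, and it should be stated as such.
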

\begin{proof}

Let $x_1, x_2, ..., x_n$ be the converged particles in the PF solution, and let $c$ be the centroid of these particles. The particle filter algorithm iteratively updates the particles according to the likelihood function and the prior distribution. After a finite number of iterations, the particles converge to a stable solution that approximates the true position.

We can represent the particle filter algorithm as an iterative process, where the $k$-th iteration is represented by the function $g_k(x_{k-1}) = x_k$. We can prove that the sequence of iterations $x_k$ converges to a fixed point $x^*$ of the function $g(x) = \lim_{k \rightarrow \infty} g_k(x)$

Now, let $x$ be any candidate particle with a minimum difference from each of the converged particles $x_1, x_2, ..., x_n$. By the definition of the centroid, we know that $\sum_{i=1}^{n} | x_i - c|$ is minimized. Therefore, $| x - c|$ is also minimized, which means that $x$ is closest to $c$
Therefore, we can say that position estimation can be obtained with uncertainty ~$\mathcal{N\%}$ using Lemma 1, and the proposed algorithm converges over time and results in accurate location estimation.

A similar proof can be derived to guarantee the convergence of the CDOA-EM localization output.
\end{proof}

\begin{lemma}
The accuracy of location estimation depends upon the resolution spread $\Re$, i.e., 
\begin{equation}
    \operatorname*{argmin}_{i=0}^{n-1} |x-X_i|\leq \Re
\end{equation}
\label{lemma:spread}
\end{lemma}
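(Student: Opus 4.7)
The plan is to interpret Assumption~2 as providing a covering guarantee on the particle set: the samples $\{X_0, X_1, \ldots, X_{n-1}\}$ are spread over the bounded workspace so that every point in the region lies within distance $\Re$ of at least one sample. I would then use this cover to bound the nearest-sample distance from any candidate true location $x$, and tie that geometric bound to the localization accuracy.

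First, I would make the covering property precise. By Assumption~2, particles are drawn so that the spatial spread is at most $\Re$ (interpreted as the grid resolution in CDOA-EM and as the resampling scale in CDOA-PF). Equivalently, the union $\bigcup_{i=0}^{n-1} B(X_i, \Re)$ of closed balls of radius $\Re$ centred at the samples covers the feasible workspace. This is the standard covering interpretation of a discretization with resolution $\Re$, and it holds by construction for the grid used in Algorithm~\ref{alg:em-cdoa} and, after convergence (Theorem~\ref{theorem:convergence}), for the particle cloud of Algorithm~\ref{alg:pf-cdoa}.

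Second, for any true position $x$ in the workspace, the covering property guarantees that there exists an index $j \in \{0,\ldots,n-1\}$ with $x \in B(X_j, \Re)$, hence $|x - X_j| \leq \Re$. Taking the minimum over all indices yields
\begin{equation}
\min_{i=0,\ldots,n-1} |x - X_i| \;\leq\; |x - X_j| \;\leq\; \Re,
\end{equation}
which is exactly the bound stated in the lemma (reading the $\operatorname*{argmin}$ symbol as denoting the associated minimum distance). Because the estimate $x_l^\ast$ produced by either variant is restricted to the sample set, no estimator can be closer to $x$ than this nearest-sample distance, so $\Re$ is a tight upper bound on the attainable accuracy in the noiseless regime and, combined with Lemma~\ref{lemma:error}, jointly with the RSSI noise $\mathcal{N}\%$ in the noisy regime.

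The main obstacle will be formally justifying the covering interpretation of Assumption~2, since the assumption is stated in terms of the centroid of converged particles rather than an explicit cover. I expect to handle this by treating CDOA-EM and CDOA-PF separately: for CDOA-EM the grid tiling trivially produces a uniform cover of mesh $\Re$, while for CDOA-PF I would appeal to Theorem~\ref{theorem:convergence} to argue that once the particles have converged around the high-likelihood region, the resampling step preserves an effective spacing of at most $\Re$ in that region, which is where the true position lies with high probability. This reduces the lemma, in both cases, to the elementary geometric inequality above.
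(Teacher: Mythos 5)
Your proposal lands on essentially the same reduction as the paper — both arguments come down to the elementary fact that the distance from the true position $x$ to its nearest sample $X_j$ is at most $\Re$ — but your version is the more honest of the two. The paper's own proof is circular at the decisive step: after identifying $X_j$ as the minimizer, it justifies $|x-X_j|\leq\Re$ by invoking ``the definition of $\operatorname{argmin}_{i}(x-X_i)\leq\Re$,'' i.e., it assumes exactly the inequality it is trying to prove and never supplies a geometric reason why any sample should be within $\Re$ of $x$. You correctly identify that the missing content is a covering property of the sample set, you supply the one-line deduction from cover to nearest-neighbor bound, and you explicitly flag that Assumption~2 as written (a condition on the centroid $\tfrac{1}{n}\sum_i X_i\leq\Re$, which is dimensionally a statement about an average position, not a cover) does not literally deliver that covering property. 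Your proposed repair is clean for CDOA-EM, where the grid of mesh $\Re$ is a cover by construction. For CDOA-PF a residual gap remains: resampling concentrates particles near high-weight regions and can leave arbitrarily large holes elsewhere, so the claim that the converged cloud maintains spacing $\leq\Re$ around the true position rests on Theorem~\ref{theorem:convergence}, which is itself only loosely argued in the paper. That gap is not one you introduced — the paper's proof silently has the same dependence — but it is worth stating that for the PF variant the lemma holds only conditionally on convergence of the particle cloud to a neighborhood of the true position, not unconditionally from Assumption~2.
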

\begin{proof}
According to Theorem 1, we have proven that the PF/EM solution converges to the actual position after finite iterations. Let $X = {X_0, X_1, \ldots, X_{n-1}}$ be a set of $n$ known locations and let $x$ be an unknown location that we wish to estimate. We are given that the accuracy of location estimation depends on the resolution spread $\leq\Re$ based on Assumption 2.
To prove this lemma, we will show that for any $x$, the distance between $x$ and its closest known location $X_i$ is less than or equal to $\Re$.

We have demonstrated that Theorem 1's statement is true and that the state estimation converges to the actual position after a finite number of iterations. Now, define $X$ as a collection of $n$ known locations, where $X = {X_0, X_1, \ldots, X_{n-1}}$ is the set, and define $x$ as an unknown location that we want to estimate. By Assumption 2, we are informed that the resolution spread $\leq\Re$ determines how accurately we can estimate the position.
To demonstrate this lemma, we shall demonstrate that for any $x$, $x$'s distance from its nearest known position $X_i$ is less than or equal to $Re$.

First, we note that for any $i \in {0, 1, \ldots, n-1}$, the distance between $x$ and $X_i$ is given by $|x-X_i|$. Therefore, the closest known location $X_j$ to $x$ is the one that minimizes this distance, i.e. ${argmin}_{i=0}^{n-1} |x-X_i| = X_j$.
We want to show that $|x-X_j| \leq \Re$. To do this, we use the definition of ${argmin}{i=0}^{n-1} (x-X_i)\leq \Re$. We know that ${argmin}{i=0}^{n-1} (x-X_i)\leq \Re$ means that for any $i \neq j$, we have $|x-X_j| \leq |x-X_i| \leq \Re$. It follows that $|x-X_j| \leq \Re$ proves the lemma.
\end{proof}

\begin{lemma}
\textbf{(Coverage)} With a minimum of 4 WSNs or IoT nodes in the network available for collaboration with a sensing range of $r$ each, the CDOA localization method's maximum coverage area is $\frac{r^2}{2}$, as long as the robot node to be localized is within the boundary of the collaborating wireless nodes.
\label{lemma:coverage}
\end{lemma}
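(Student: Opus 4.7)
The plan is to reduce the coverage claim to a simple geometric optimization. Since the CDOA gradient in Eq.~\eqref{eqn:gradient} requires an RSSI reading from every one of the four WSNs, the robot must lie within the sensing disk of radius $r$ centered at each WSN simultaneously; combined with the hypothesis that the robot stays inside the convex boundary of the WSNs, the localizable region is the intersection of the boundary polygon with the four disks of radius $r$. With the four-corner rectangular deployment assumed throughout Section~IV, this is exactly the set of points inside the $a \times b$ rectangle that lie within distance $r$ of each of the four corners.

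First I would place the WSNs at the corners of an $a \times b$ rectangle and observe that the farthest point from a fixed corner, taken over the closed rectangle, is the diagonally opposite corner, at distance $\sqrt{a^{2}+b^{2}}$. Requiring that \emph{every} point of the rectangle be sensed by \emph{every} corner WSN therefore yields the single constraint
\begin{equation}
a^{2}+b^{2} \leq r^{2}.
\end{equation}
In this regime the whole rectangle is covered, so the coverage area equals $ab$.

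Next I would maximize $A = ab$ subject to $a^{2}+b^{2} \leq r^{2}$. An application of the AM--GM inequality gives
\begin{equation}
ab \;\leq\; \tfrac{1}{2}\!\left(a^{2}+b^{2}\right) \;\leq\; \tfrac{r^{2}}{2},
\end{equation}
with equality attained at $a = b = r/\sqrt{2}$, i.e.\ the square configuration. This establishes the $r^{2}/2$ upper bound, and exhibits an explicit deployment achieving it, so the bound is tight.

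The main obstacle I expect is justifying that no other arrangement of four (or more) WSNs can do better, since the statement says ``maximum coverage area'' rather than specifically the rectangular case. I would argue this in two steps: (i) adding WSNs beyond the four corner anchors can only \emph{shrink} the feasible region, since each extra disk constraint is an additional intersection, so it suffices to treat the four-node case; (ii) for any four-point deployment whose convex hull contains the coverable region, the hull is a (possibly degenerate) quadrilateral of diameter at most $r$, and among all such quadrilaterals the axis-aligned square of diagonal $r$ maximizes area, matching the bound above. Combining these observations with Lemma~\ref{lemma:error} (which already ties accuracy to the geometric spread of the anchors) closes the argument.
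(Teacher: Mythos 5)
Your proof is correct and takes essentially the same route as the paper's: both reduce the claim to the requirement that the deployment region's diagonal not exceed the sensing range $r$, giving side $\leq r/\sqrt{2}$ and area $\leq r^2/2$. The only difference is that you start from a general $a\times b$ rectangle and optimize via AM--GM, which folds in the content the paper defers to Lemma~\ref{lemma:expansion} (Coverage Generalization), and your closing remarks on non-rectangular deployments go slightly beyond what the paper argues.
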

\begin{proof}
Assume that the four WSNs or IoT nodes $A$, $B$, $C$, and $D$ are situated at the four corners of a square region (see Fig.~\ref{fig:overview} and \ref{fig:coverage}). Assume that each side of the square is $s$ in length. 
The maximum distance between the robot $i$ and any of these nodes must be less than or equal to the diagonal of the square, which is $\sqrt(2) s$, because $A$, $B$, $C$ and $D$ are located at the corners of the square.
As a result, the sensing range $r$ must be at least a distance of $\sqrt(2) s$ to connect the robot node with other supporting nodes in the network (i.e., $r \geq \sqrt{2} s$, meaning $s \leq \frac{r}{\sqrt{2}}$). 
The square region, which forms the outer boundary workspace of the robot node, has an area of $s^2$.
Therefore, the reliable area of the region that the CDOA approach can cover for localization with nodes having a sensing range of $r$ is $\geq \frac{r^2}{2}$ as long as the robot is inside the boundaries of the four cooperating nodes.
Moreover, expressing the coverage area $A(r)$ as a function of $r$, we can take its derivative as 
\begin{equation}
    \frac{\partial A(r)}{\partial r} = \frac{\partial (\frac{r^2}{2})}{\partial r}  = 1
\end{equation}
We can see that this derivative is always positive and indicates the linearity of the coverage area concerning the number of nodes and the sensing range. 
\end{proof}

\begin{figure}[t]
    \centering
    \includegraphics[width=0.9\linewidth]{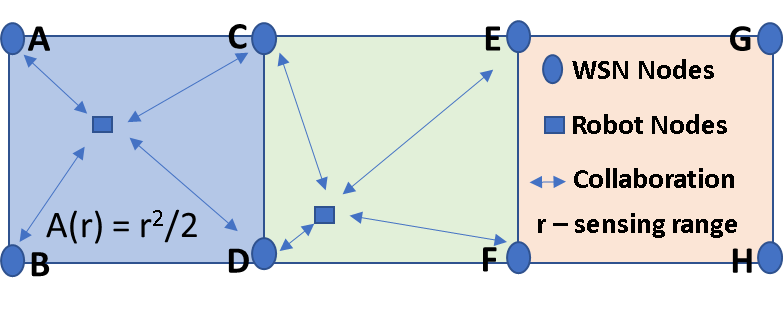}
    \vspace{-4mm}
    \caption{Depiction of coverage area for CDOA localization.}
    \label{fig:coverage}
    \vspace{-4mm}
\end{figure}
\textbf{Remark 1} Minimum of three nodes (instead of 4 nodes) can be sufficient as per the one-way finite difference equation for node collaboration to estimate DOA (see \cite{parasuraman2013spatial}), as long as all the nodes encompass the convex hull of the area boundary.

\textbf{Remark 2} If more nodes are available to collaborate than the minimum number of nodes, this allows exploiting redundancy in the CDOA estimation, providing robustness and accuracy advantages.

\begin{lemma}
\textbf{(Coverage Generalization)} The minimum area coverage mentioned in Lemma~\ref{lemma:coverage} can be generalized to a rectangular region with an area of $r^2.\frac{k}{k^2 + 1}$, where $k$ is the width factor of the length and width of the new rectangular area, and $r$ is the sensing range. The maximum coverage is achieved when the length is equal to the width (square region).
\label{lemma:expansion}
\end{lemma}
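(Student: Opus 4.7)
The plan is to mirror the square-case argument in Lemma~\ref{lemma:coverage}, but now parameterized by the aspect ratio of the rectangle. Specifically, I would place the four cooperating WSNs at the corners of a rectangle with width $w$ and length $L = k w$, so that $k$ captures the deviation from the square case ($k = 1$).

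First I would invoke the same connectivity requirement as before: for every pair of corner nodes to be within each other's sensing range $r$, the farthest distance (the rectangle's diagonal) must satisfy $\sqrt{w^2 + L^2} \leq r$. Substituting $L = kw$ gives $w \leq r / \sqrt{1 + k^2}$. Then the area of the covered workspace is
\begin{equation}
A(k) = w \cdot L = k w^2 \leq \frac{k\, r^2}{1 + k^2} = r^2 \cdot \frac{k}{k^2 + 1},
\end{equation}
which establishes the claimed bound.

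Next I would show that the square case maximizes this coverage, by differentiating $f(k) = \frac{k}{k^2 + 1}$ to obtain $f'(k) = \frac{1 - k^2}{(k^2 + 1)^2}$. The unique positive stationary point is $k = 1$, and since $f'(k) > 0$ for $k < 1$ and $f'(k) < 0$ for $k > 1$, this is a global maximum over $k > 0$. Evaluating gives $f(1) = 1/2$, recovering the $r^2/2$ bound of Lemma~\ref{lemma:coverage} as a special case.

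There is no real obstacle here beyond bookkeeping: the argument is a direct reparameterization of Lemma~\ref{lemma:coverage} combined with an elementary single-variable optimization. The only subtlety worth flagging is the convention for $k$ — I would state explicitly that $k$ denotes the ratio of length to width (equivalently $1/k$ by symmetry, since $f(k) = f(1/k)$), so that the formula and the ``length equals width'' characterization of the maximum are unambiguous.
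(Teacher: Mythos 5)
Your proof is correct and follows essentially the same route as the paper's: bound the rectangle's diagonal by the sensing range $r$, solve for the side length, and express the area as $r^2\cdot\frac{k}{1+k^2}$, maximized at $k=1$. Your write-up is actually tidier than the paper's (which contains an algebraic slip in writing $r \geq l^2(1+k^2)$ and conflates $l^2$ with the area $kl^2$), and your explicit derivative argument for the maximum at $k=1$ makes rigorous what the paper dismisses as trivial.
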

\begin{proof}
Trivial. Given are the dimensions of a rectangle: length $l$, width $w$, and aspect ratio $k$ (the width factor that makes $w = kl$). The sensing range $r$ must be greater or equal to the largest diagonal of the rectangle. That is, $r \geq \sqrt(l^2 + w^2)  \geq l^2 (1 + k^2)$. Therefore, the coverage area is expressed as $l^2 \leq r^2.\frac{k}{1+k^2}$. 
Moreover, it is trivial to observe that the width and length must be identical in order to cover the most area, i.e., when $k = 1$. This relates to a square region. 
\end{proof}

\textbf{Remark 3} The CDOA estimation and coverage area can be generalized to an arbitrary polygonal shape of the boundary of the localization workspace as long as the convex hull of the boundary nodes can be defined. For instance, the CDOA can be estimated using gradient estimation algorithms as proposed in \cite{han2009access,verma2018direction} when the WSNs are distributed in the workspace without a specific geometric pattern.

\begin{lemma}
        \textbf{Number of Collaborating Nodes} A minimum of $2n + 2$ number of nodes are required to cover an area of $n\frac{r^2}{2}$, for wireless nodes with sensing range $r$.
\label{lemma:nodes}
\end{lemma}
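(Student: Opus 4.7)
The plan is to establish the result by induction on $n$, using Lemma~\ref{lemma:coverage} as the base case and exploiting the fact that adjacent square tiles can share boundary nodes.

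For the base case $n=1$, Lemma~\ref{lemma:coverage} already gives us that $4 = 2(1)+2$ WSNs at the corners of a square of side $s \leq r/\sqrt{2}$ cover an area of exactly $\frac{r^2}{2}$, so the claim holds. For the inductive step, I would assume that $n$ unit regions, each of area $\frac{r^2}{2}$, can be covered by $2n+2$ nodes arranged so that the regions tile a rectangular strip whose short side equals $s$. I would then show that appending one more square tile of side $s$ along the existing strip requires only $2$ additional nodes at the two new corners, since the opposite edge is already occupied by two nodes shared with the previous tile. This yields $2n+2+2 = 2(n{+}1)+2$ nodes for $(n{+}1)\cdot\frac{r^2}{2}$ area, closing the induction.

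The key substantive check, and the main obstacle, is to verify that the sensing range constraint $r \geq \sqrt{2}\,s$ from Lemma~\ref{lemma:coverage} continues to hold for each individual tile in the extended configuration, and that the collaboration mechanism in Alg.~\ref{alg:cdoa} remains valid when a node is shared between two adjacent tiles. The first point follows immediately because each tile is an identical square of side $s \leq r/\sqrt{2}$, so the diagonal within any tile still fits inside the sensing disk of its four corner nodes. The second point follows from Remark~1 and the structure of Eq.~\eqref{eqn:gradient-general}: the CDOA computation for a robot inside a given tile depends only on the four corner nodes of that tile, and the shared nodes simply participate in two independent collaboration groups corresponding to the two tiles they border.

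Finally, I would note that this bound is tight in the sense that removing any of the $2n+2$ nodes would either break the convex-hull requirement of Remark~1 for at least one tile or reduce the total covered area below $n\cdot\frac{r^2}{2}$, so the number $2n+2$ is indeed the minimum under the tiling construction induced by Lemma~\ref{lemma:coverage}. I would also briefly remark that by Lemma~\ref{lemma:expansion}, the same $2n+2$ nodes can cover a slightly different total area if the tiles are chosen to be non-square rectangles, but the maximum per-tile area $\frac{r^2}{2}$ is attained only when each tile is square, which is consistent with the stated bound.
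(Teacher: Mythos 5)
Your proposal is correct and follows essentially the same route as the paper's proof: tile the region with $n$ copies of the unit square from Lemma~\ref{lemma:coverage} arranged in a strip, observe that each new tile shares two corner nodes with its neighbor and therefore contributes only two new nodes, and conclude that $2n+2$ nodes suffice for area $n\frac{r^2}{2}$. Your explicit inductive framing and the check that the per-tile sensing constraint $r \geq \sqrt{2}\,s$ is preserved are slightly more careful than the paper's informal replication argument, but the underlying decomposition and counting are identical.
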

\begin{proof}
Let $A$ be the unit area of the square that can be covered by 4 WSNs (using the result of Lemma~\ref{lemma:coverage}. To extend this coverage beyond this unit area with a scaling factor of $n$, we would need $2n$ more nodes, which require replicating the square region $n$ number of times as shown in Fig.~\ref{fig:coverage}. With $2n+2$ nodes, the maximum area of coverage then becomes $nA = n\frac{r^2}{2}$. For example, for a unit area $A$ and $n=1$, we need four nodes (nodes A-D); with double the coverage area, we need a minimum of 6 nodes (nodes A-F). Therefore, the number of nodes required for CDOA scales linearly with the coverage requirement.
\end{proof}

\begin{theorem}
\textbf{(CDOA Scalability)} Combining the results of Lemmas~\ref{lemma:expansion} and \ref{lemma:nodes}, the approximate linear relationship between the number of nodes $n$ and the coverage area $A$ for localization using the CDOA approach can be expressed as $A \approx cn + d$, where $c$ and $d$ are constants.
\end{theorem}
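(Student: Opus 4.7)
The plan is to derive the linear relation by inverting the node-to-area map established in Lemma~\ref{lemma:nodes} and then invoking Lemma~\ref{lemma:expansion} to justify that the same linear form persists when the underlying unit tile is taken to be rectangular rather than strictly square. To avoid a symbol clash with the ``$n$'' used as a scaling factor in Lemma~\ref{lemma:nodes}, I would rename the scaling factor $k$ (the number of unit tiles) and reserve $n$ for the total node count as in the statement of the theorem.

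First, I would recall from Lemma~\ref{lemma:nodes} that tiling $k$ copies of the unit square coverage region requires $n = 2k + 2$ nodes and yields area $A = k\,\frac{r^{2}}{2}$. Solving the first equation for $k$ gives $k = (n-2)/2$, and substituting into the expression for $A$ yields
\begin{equation}
A \;=\; \frac{r^{2}}{4}\,n \;-\; \frac{r^{2}}{2},
\end{equation}
which is exactly of the form $A \approx c\,n + d$ with $c = r^{2}/4$ and $d = -r^{2}/2$. Both constants depend only on the fixed sensing range $r$, which matches the theorem's claim that $c$ and $d$ are constants.

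Next, I would argue that the relation is only \emph{approximately} linear (as the theorem states) because the construction in Lemma~\ref{lemma:nodes} is discrete: $k$ must be a positive integer, so the graph of $(n, A)$ is actually a sequence of points $\{(2k+2, kr^{2}/2)\}_{k\geq 1}$ that lie exactly on the line above. For arbitrary $n$ that are not of the form $2k+2$, one can use Lemma~\ref{lemma:expansion} to interpolate: by choosing the aspect ratio $k^{\prime}$ of individual tiles appropriately, the per-tile coverage $r^{2}\cdot k^{\prime}/(1+k^{\prime 2})$ can be made to vary continuously between $0$ and the maximum $r^{2}/2$ achieved at $k^{\prime}=1$. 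This ensures the discrete points can be connected by a piecewise-linear envelope bounded above by the square-tile line, justifying the ``$\approx$'' in the statement.

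The main obstacle I anticipate is not the algebra itself, which is essentially a substitution, but making the argument for the ``approximate'' qualifier rigorous. In particular, one must be careful that (i) the boundary conditions of Lemma~\ref{lemma:coverage} (robot inside the convex hull of the WSNs) continue to hold when tiles are concatenated, so that adjacent tiles share edges of nodes without breaking the collaboration assumption, and (ii) the per-node contribution to $A$ does not degrade as $n$ grows, i.e., there is no hidden nonlinear overhead from sharing nodes along shared edges of adjacent unit squares. I would address (ii) by noting that each added tile requires only two new nodes (the other two being shared with the previous tile), which is precisely why the slope $c$ stays constant at $r^{2}/4$ rather than shrinking, completing the justification of the linear scaling claim.
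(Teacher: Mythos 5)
Your proposal is correct, and it takes a genuinely different route from the paper. The paper argues by mathematical induction: it verifies a base case at $n=4$, assumes $A_k \approx ck+d$, and then posits that adding one node increases coverage by $\Delta A \approx \alpha A_k$ for a constant $\alpha$, concluding $A_{k+1} \approx (c+\alpha)k + (d+\alpha d)$. You instead invert the node-count relation of Lemma~\ref{lemma:nodes} directly: from $n = 2k+2$ and $A = k\,\frac{r^2}{2}$ you obtain $A = \frac{r^2}{4}n - \frac{r^2}{2}$, which exhibits the linear form with \emph{explicit} constants $c = r^2/4$ and $d = -r^2/2$. Your derivation is more elementary and arguably tighter: the paper's inductive step assumes the increment is proportional to the \emph{existing} coverage area ($\Delta A \propto A_k$), which, taken literally, would produce geometric rather than linear growth and also changes the constants at each step of the induction, whereas your substitution shows the tiling construction places the points $(2k+2,\, k r^2/2)$ exactly on a single fixed line. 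Your additional care about shared nodes along tile edges (each new tile costs only two new nodes, which is what fixes the slope at $r^2/4$) and your use of Lemma~\ref{lemma:expansion} to justify the ``$\approx$'' for node counts not of the form $2k+2$ go beyond what the paper offers and strengthen the result rather than weaken it.
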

\begin{proof} We will prove this theorem using the mathematical induction principle.
Considering Eq.~\eqref{eqn:gradient}, which uses four wireless nodes ($n=4$) and proposed approach able to find position estimation in the bounded region, given by algorithm Alg.\ref{alg:cdoa}, also, in this case, the coverage area is $A_4 = 4c+d$ where $c$ and $d$ both are constants, which satisfies the theorem (see Lemma~\ref{lemma:expansion}).
Next, we will assume that the theorem holds for some arbitrary value of $n=k$ where $k>4$. In other words, we assume that $A_k \approx ck + d$.
Now, we need to prove that the theorem also holds for $n=k+1$. The coverage area for $n=k+1$ nodes is given by $A_{k+1} = A_k + \Delta A$, where $\Delta A$ is the increase in coverage area due to the addition of one more node.
Since the CDOA approach is based on the DOA of RSSI, it can be assumed that the increase in coverage area due to adding one more node is approximately proportional to the existing coverage area. This means that $\Delta A \approx \alpha A_k$, where $\alpha$ is a constant (see Lemma~\ref{lemma:nodes}).
Substituting the value of $A_k$ from the induction hypothesis, we get $\Delta A \approx \alpha (ck+d)$. This means that the total coverage area for $n=k+1$ nodes is given by:
\begin{eqnarray}
A_{k+1} \approx ck+d + \alpha (ck+d) = (c+\alpha)k + (d+\alpha d). \\
A_1 = d ; \; A_{k+1} \approx (c+\alpha)k + (d+\alpha d) ; \; A_n \approx cn + d 
\end{eqnarray}

Thus, we have proved that the theorem holds for $n=k+1$. Since we have established the base case and have shown that the theorem holds for all $n=k+1$ if it holds for some arbitrary value of $n=k$, we can conclude that the theorem holds for all $n \geq 4$.
\end{proof}

\textbf{Computational Complexity} 
Let $n$ be the number of samples (or particles) in the EM (or PF), and $N$ be the number of cooperative nodes in the wireless sensor network.
At each step, the robot needs to find pose estimation based on our proposed WSN cooperative localization algorithm, which involves the following steps:
\begin{itemize}
    \item EM and PF initialization with random particles: $O(n)$.
    \item Weight transfer to new PF for each sample in $O(n)$.
    \item Robot-WSN collaboration in an open time window of $\alpha$ for sharing and receiving wireless signals from $N$ nodes in $O(\alpha N)$ time.
    \item Sample weight calculation based on CDOA in $O(n)$.
    \item Finding the soft max from particle weight distribution as the best pose estimate in $O(n/2)$.
\end{itemize}
Therefore, one iteration is $O(n \times \alpha(N))$ where $\alpha$ and $N$ are constant values and have a low impact on overall computation complexity; hence, overall time consumed by the proposed algorithm would be $O(n)$.

\section{Experimental Validation}
We implemented our approach and compared the performance with relevant recent methods from the literature (see Sec.~\ref{sec:SOTA}.
We performed extensive experiments through simulations (Sec.~\ref{sec:sim-exp}), real-world datasets (Sec.~\ref{sec:dataset}), and real robot in-house experiments (Sec.~\ref{sec:hardware_testbed}) to verify and validate the performance of the proposed localization in terms of accuracy measured through the Root Mean Squared Error (RMSE) and efficiency measured through the Time Per Iteration (TPI) metrics.
In each experiment, we made 100 trials and averaged the localization error over all trials as the distance between the predicted and the actual positions (ground truth). 
The experiment settings shown in Table~\ref{tab:exp-config} show diverse settings under which we evaluate the proposed method.

\begin{table}[t]
\begin{center}
\caption{Experiment Configurations}
\label{tab:exp-config}
\vspace{-4mm}
\resizebox{\linewidth}{!}{
\begin{tabular}{|p{13em}|c|c|c|c|}
\hline
\textbf{Simulation Parameters}&\multicolumn{4}{|c|}{\textbf{Experiment Basis}} \\
\cline{2-5} 
& \textbf{Simulations} & \textbf{Dataset1} & \textbf{Dataset2} &\textbf{Hardware}\\   
\hline
Space Dimensions & $6 \times 6 $ & $4 \times 4 $ & $10 \times 10$ & $2.34 \times 1.75$ \\
\hline
Resolution of CDOA-EM (ppi) & 0.05 & 0.05 & 0.1 & 0.05\\
\hline
Resolution of CDOA-PF (ppi) &0.08 & 0.1 & 0.5 & 0.08 \\
\hline
\end{tabular}
}
\end{center}
\end{table}

\subsection{Comparison with the State-of-the-Art (SOTA)}
\label{sec:SOTA}

To validate the results of our proposed approach, we implemented the five model-based solutions from the recent literature: 1) \textbf{Trilateration} \cite{yang2020trilateration}, 2) \textbf{WCL: Weighted centroid localization} \cite{weightedCentroid2014survey}, 3) \textbf{D-RSSI: differential RSSI-based localization} \cite{podevijn2018comparison}, 4) \textbf{I-RSSI: Improved RSSI based localization} \cite{Xue2017improvedrssi}, 5) \textbf{PF-EKF: Particle filter - Extended Kalman Filter} \cite{zafari2018baysianfiltering}, {6) \textbf{SBL-DOA: Sparse Bayesian Learning applied over the Direction of Arrival} information \cite{wang2019assistant}.}
Our CDOA-PF approach applies the  Gaussian probability over CDOA for a fixed number of sampled particles, and in our CDOA-EM, the solution is extensively searched through each and every grid of the workspace with a fixed resolution (the EM implementation is similar to the method in \cite{measRSS}).
More details on each of these methods are included in Appendix~\ref{sec:appendix-methods}.

\begin{figure*}[ht]
    \centering
\includegraphics[width=0.98\linewidth]{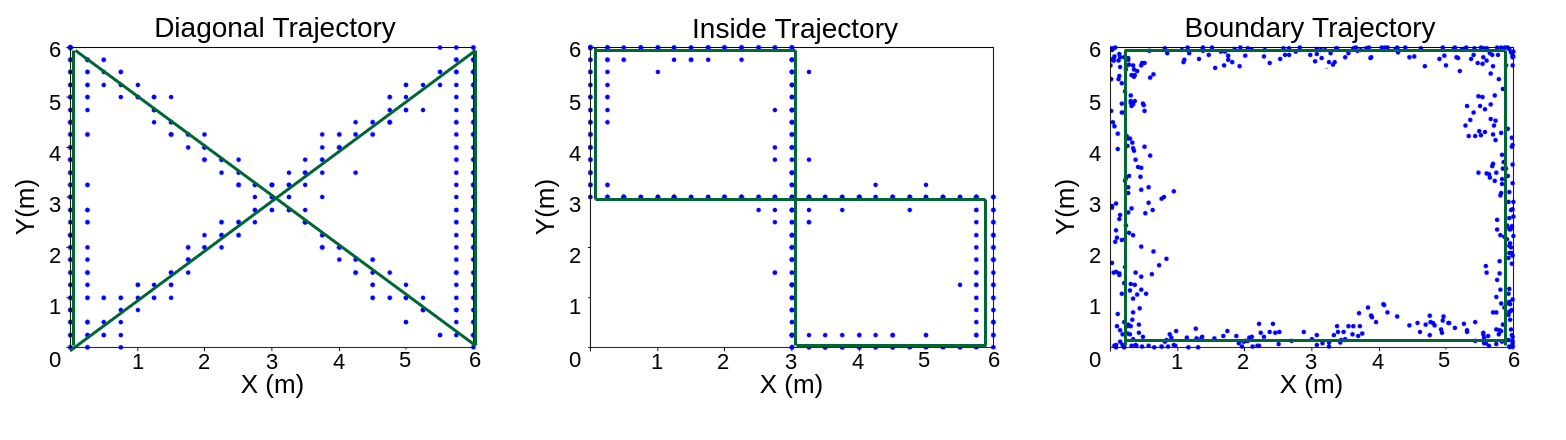}
\vspace{-6mm}
\caption{The test sample locations in the simulation experiments are shown here. The plots show the robot's actual trajectory (where the signal sample was taken) in solid lines and the predicted locations in scattered dots.}
    \label{simulation_trajectory}
\end{figure*}

\subsection{Numerical Analysis with Simulations} 
\label{sec:sim-exp}
We simulated four WSNs distributed on the corners of the simulation workspace, with the robot's initial position being their center (Fig.~\ref{fig:overview}). We simulated three different trajectories for robot motion: Boundary (left), Cross Coverage (center), Diagonal (right), and the scale of the workspace is 6m x 6m, as shown in Fig.~\ref{simulation_trajectory}.
Through these paths, we cover all potential positions within the bounded region.

We measure the RSS value of each WSN for all positions along the robot's path. The estimated RSS based on the log-normal radio signal fading model is computed as:
\begin{equation}
RSSI = A - 10\times \eta \times \log_{10}(d) ,
\end{equation}
where $\eta$ denotes the path loss exponent, which varies between 2 (free space) and 6 (complex indoor environment), \(d\) denotes the distance from Robot \(R\) to the node \(N\), and \(A\) denotes received signal strength at a reference distance of one meter.
We used this setup to perform experiments and validate the accuracy of localization techniques for different noise conditions on the measurements simulated through a zero-mean Gaussian noise varying from 1 to 4 dBm variance.
The path loss exponent $\eta$ is set to 3 in our simulations to present a reasonable indoor environmental channel in our simulations. \textcolor{black}{The simulation effectively mimics noise in RSSI by incorporating factors like distance, signal frequency, and environmental conditions. Moreover, with varied noise levels, it aptly represents diverse real-world scenarios, substantiating its representativeness in our experiments.}

\begin{figure}[t]
\centering
 \includegraphics[width=0.47\linewidth]{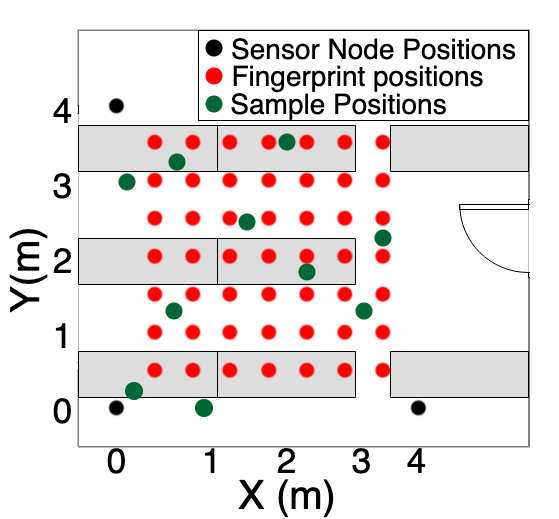}
 \includegraphics[width=0.51\linewidth]{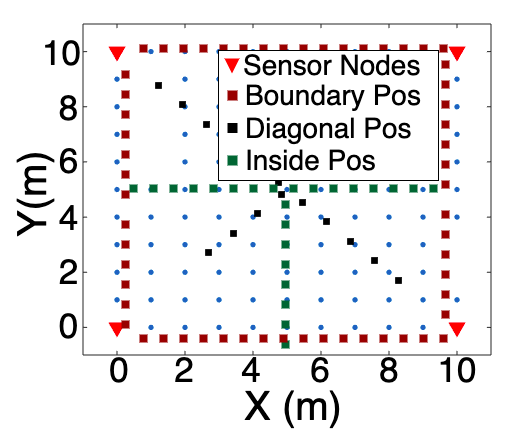}
 \vspace{-4mm}
 \caption{\textcolor{black}{Locations of the RSSI sample points from the real-world datasets 1 (Left) and 2 (Right) discussed in Sec.~\ref{sec:dataset}.}}
 \label{fig:datasets}
\end{figure}

\subsection{Real-world Datasets}
\label{sec:dataset}
We used two different publicly available real-world RSSI datasets on indoor localization. 
\textcolor{black}{See Fig.~\ref{fig:datasets} for the illustration of the workspace of the datasets along with the positions of the WSNs and locations where the RSSI samples are measured.}

\textbf{Dataset 1\footnote{\url{https://github.com/pspachos/RSSI-Dataset-for-Indoor-Localization-Fingerprinting}}} Provides RSSI values for three wireless technologies; BLE, Zigbee, and Wi-Fi in a 4m x 4m room. We have used the data for Scenario 1 of this dataset as it relates to our approach to the geometric positioning of anchor nodes. In this scenario, a room of 6.0 x 5.5 m was used as the experimental testbed. 
All transmitting devices were removed from the surroundings to establish a transparent testing medium where all devices could communicate without interference. 
The transmitters were spaced 4 meters apart in the shape of a triangle. 
The fingerprint and test points were obtained with a 0.5 m distance between the transmitters in the center. 
The database would be made up of 49 fingerprints due to this. Ten test points were chosen at random for testing. 
We have arranged the fingerprinting dataset in such a way that it makes a trajectory in the region. 

\textbf{Dataset 2\footnote{\url{https://ieee-dataport.org/documents/multi-channel-ble-rssi-measurements-indoor-localization}}} Provides RSSI values for the three regions of varying ranges in the bounded area: diagonal, boundary, and inside in a 10m x 10m room. Four anchors took RSSI measurements while receiving messages from a single mobile node, delivering advertisement and extended advertisement messages in all BLE channels (both primary and secondary advertisement channels). Four anchors were placed in the corners of a 10 x 10 m office area (no considerable impediments).
We have compared the results for different communication channels under different regions in the bounded area.

\begin{figure*}[ht]
    \centering
\includegraphics[width=0.3\linewidth]{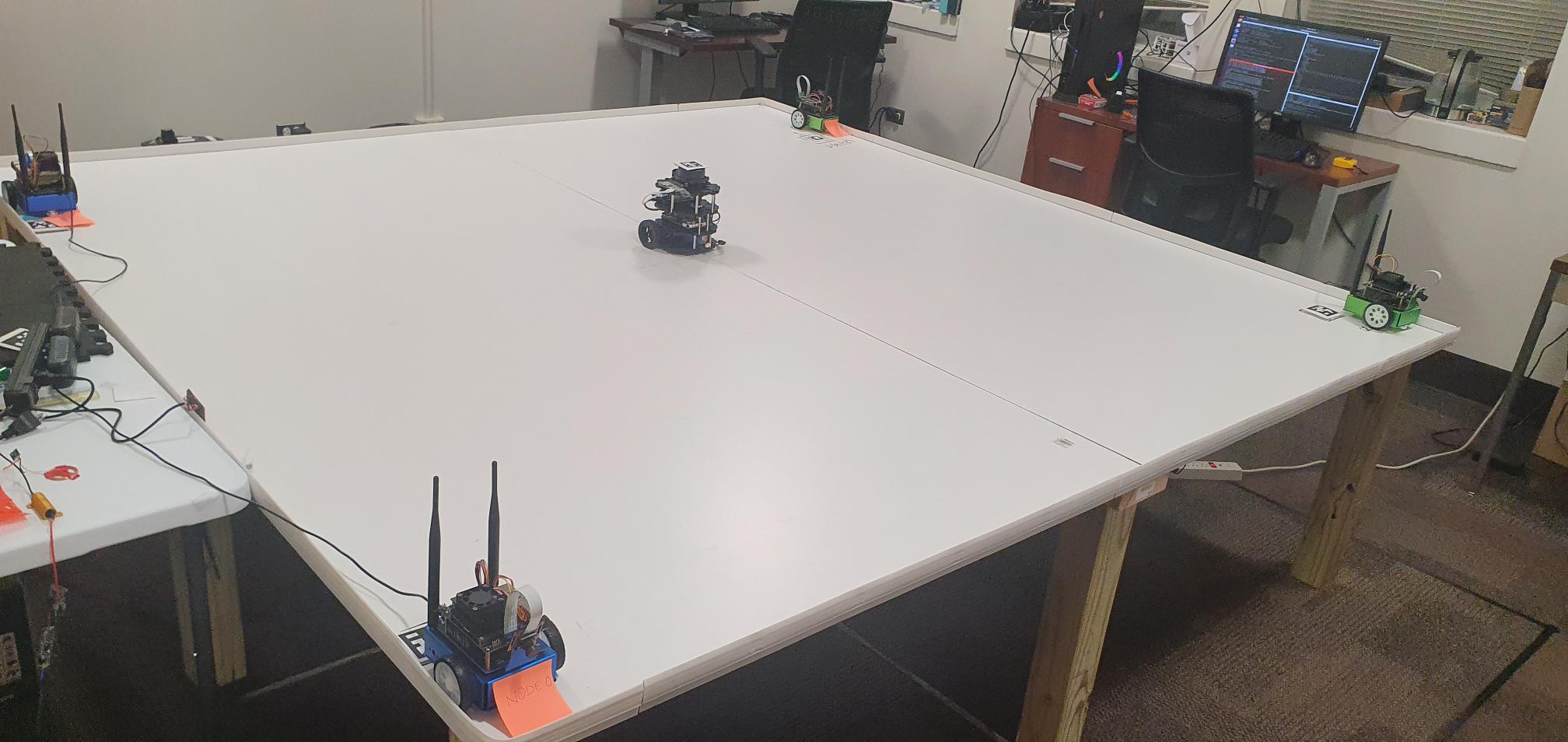}
\includegraphics[width=0.69\linewidth]{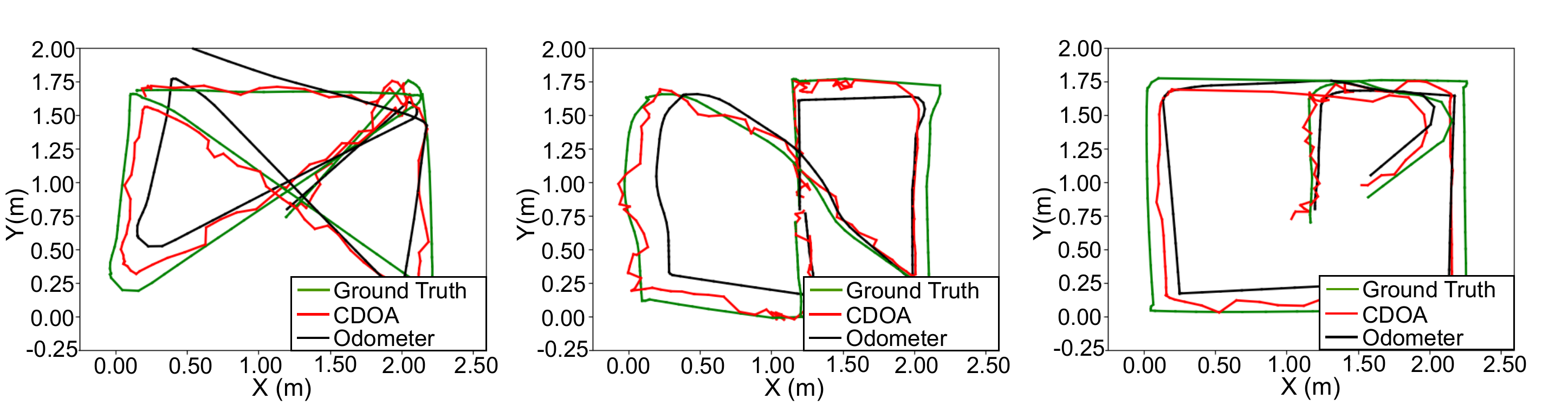}
\vspace{-4mm}
\caption{{Hardware experiment setup and samples of the output robot's trajectories (diagonal, inside, boundary) in the real robot experiments (See Sec.~\ref{sec:hardware-results}).}}
    \label{fig:hardware_trajectory}
\end{figure*}

\subsection{Real Robot Hardware Experiments}
\label{sec:hardware_testbed}
To validate the practicality of the algorithm for real-world scenarios, we performed hardware experiments on a testbed of dimensions $2.34 m \times 1.75 m$, with the ceiling-mounted camera for visual localization as ground truth; we have mounted a wireless access point of power 20dBm with a 2.4Ghz frequency over the top of a Turtlebot3 mobile robot.

Fig.~\ref{fig:hardware_trajectory} shows the experimentation test-bed and sample trajectories with the Turtlebot3 robot at the center of the WSNs on the four corners. 
In each trial, we drove the robot remotely and recorded RSSI and ground truth position with an overhead camera-based fiducial marker (AprilTag \cite{wang2016apriltag}) tracking). Robot Operating System (ROS \cite{quigley2009ros}) has been employed for inter-node communication, as ROS is the de-facto software framework used in the robotics literature. The ROS master runs a service on the experimental robot to receive perceived signal strength from all connected nodes through a synchronization service. 
\textcolor{black}{The WSN nodes operate at a 10Hz rate to calculate the RSSI and publish these values to the ROS topics.}
Five trials have been conducted for each of the three different trajectories.

\section{Results}
\label{sec:results}

Table.~\ref{tab:results} comprehensively presents the overall DOA-only localization performance, time complexity, and efficiency results of the proposed approach compared to the SOTA approaches in simulation, real-world datasets, and real robot hardware experiments. 
For time complexity, the variables $n$, $S$, and $r$ represent the number of particles in PF, the size of the grid in EM, and the resolution of grids, respectively.
Looking at the TPI (efficiency) metric, the model-based approaches such as Trilateration and WCL are the fastest because of the low computational requirement they need for every new sample. However, the CDOA-PF is comparable to the model-based methods in terms of real-time computational tractability, allowing the possibility for instantaneous localization. 
In general, the CDOA methods performed significantly better than the baselines in terms of higher accuracy and reasonable efficiency. 
More details on the results are discussed below.

\begin{table*}[ht]
\begin{center}
\caption{Overall performance results with statistics of various experiments from simulations, datasets, and hardware trials.}
\label{tab:results}
\resizebox{0.75\linewidth}{!}{
\begin{tabular}{|p{8em}|c|p{4em}|c|c|c|c|}
\hline
\textbf{Algorithm }& \textbf{Complexity} & \textbf{Average TPI (ms)} & \multicolumn{4}{|c|}{\textbf{Localization Error RMSE (m)}} \\
\cline{4-7}
&  & &\textbf{Simulations} & \textbf{Dataset1} & \textbf{Dataset2} &\textbf{Hardware}\\ 
\hline
Trilateration &$O(1)$ & \textbf{82$\pm$18} & $1.22\pm0.56$ & $2.70\pm1.78$ & $3.73\pm2.27$ & $1.63\pm0.85$ \\
\hline
WCL &$O(1)$ & $83\pm27$ & $2.47\pm0.84$ & $3.94\pm0.95$ & $4.93\pm3.26$ & $2.54\pm1.21$ \\
\hline
D-RSSI &$O(1)$& $86\pm18$ & $0.55\pm0.13$ & $2.18\pm1.26$ & $2.92\pm1.52$ & $1.53\pm0.72$ \\
\hline
I-RSSI &$O(1)$& $94\pm21$ & $0.42\pm0.09$ & $1.98\pm0.86$ & $2.43\pm1.21$ & $1.11\pm0.57$  \\
\hline
PF-EKF &$O(n)$ & $111\pm34$ & $0.91\pm0.18$ & $2.34\pm1.64$ & $3.12\pm1.78$ & $1.74\pm0.91$ \\
\hline
{
SBL-DOA} & {$O(n)$} & {$127\pm42$} & {$0.16\pm0.06$} & \textcolor{black}{$1.81\pm0.69$} & \textcolor{black}{$1.72\pm0.48$} & \textcolor{black}{$0.54\pm0.21$} \\
\hline
CDOA-EM (Ours) &$O(S*r)$& $270\pm99$ & \textbf{0.13$\pm$0.04} & $1.58\pm0.53$ & \textbf{1.66$\pm$0.73} & $0.34\pm0.08$ \\
\hline
CDOA-PF (Ours) &$O(n)$ & $102\pm25$ & $0.15\pm0.05$ & \textbf{1.17$\pm$0.48} & $1.67\pm0.83$ & \textbf{0.12$\pm$0.16} \\
\hline
\end{tabular}
}
\end{center}
\end{table*}

\subsection{Simulation Results}
\label{simulation}

\begin{figure}[t]
    \centering
 \includegraphics[width=0.98\linewidth]{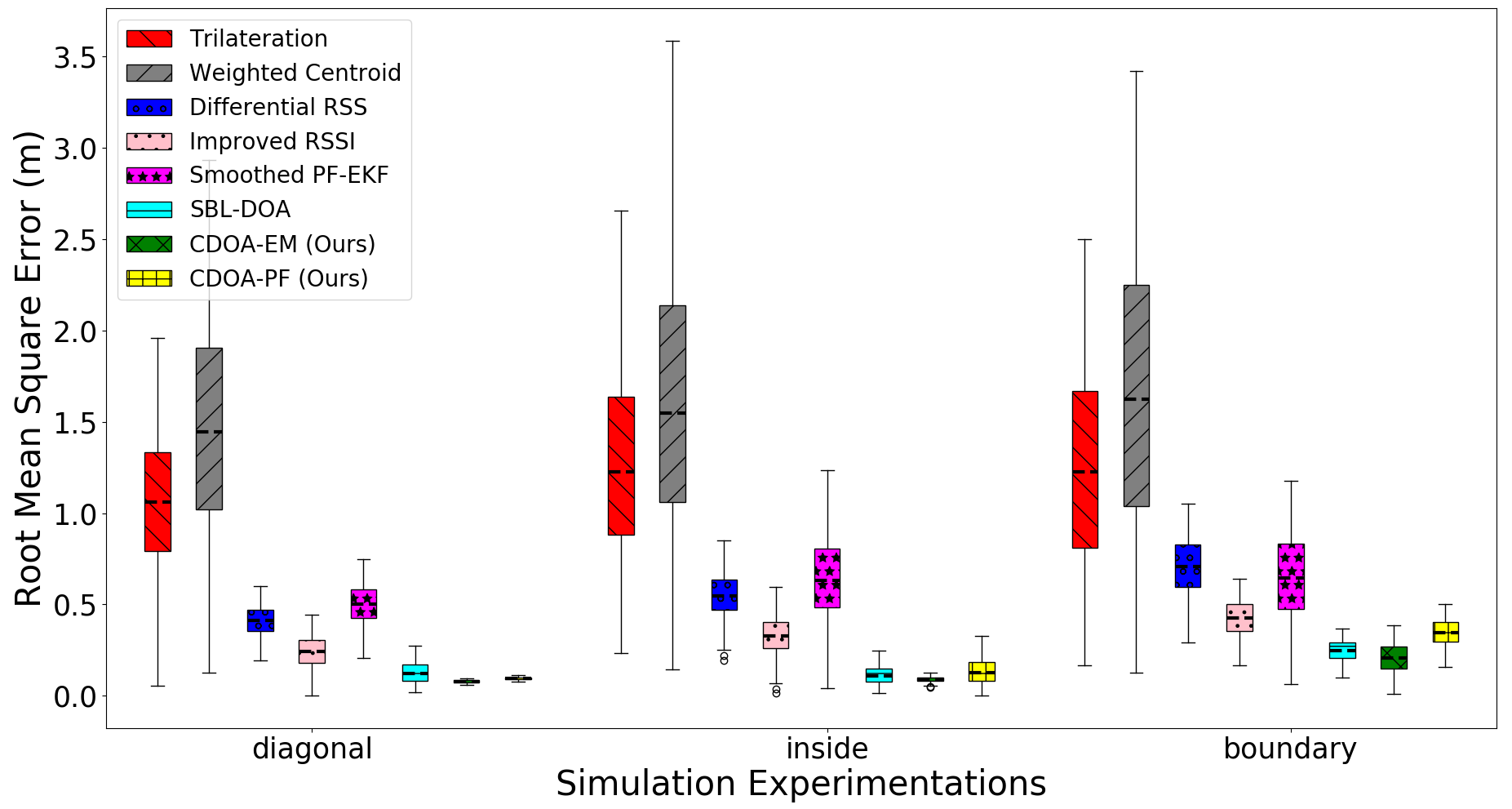}
 \vspace{-2mm}
 \caption{Comparative localization performance in the simulation environment.}
 \label{fig:sim-results}
\end{figure}

\begin{figure}[t]
\centering
\begin{center}
 \includegraphics[width=0.98\columnwidth]{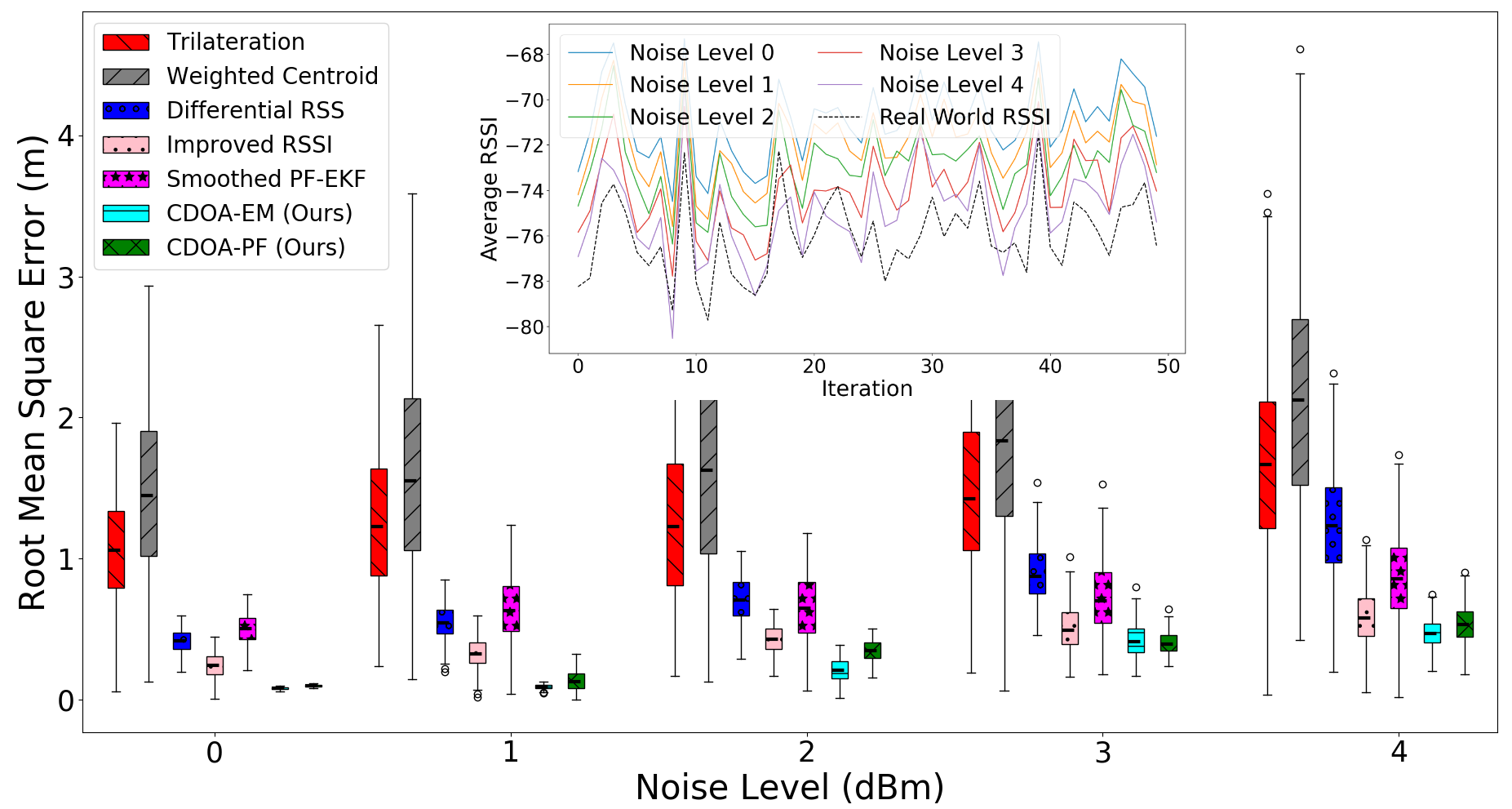}
\end{center}
\vspace{-4mm}
 \caption{\textcolor{black}{Localization performance under different RSSI noise levels. The embedded plot represents the RSSI variation at different noise levels to visualize the representativeness of the simulated RSSI with real-world data}.}
 \label{fig:simulation_noise_performance_plot}
\end{figure}

Fig.~\ref{fig:sim-results} summarizes the results of the simulation experiments. 
The proposed CDOA approaches outperformed all SOTA methods and have been shown to localize a mobile device using an existing WSN or AP infrastructure with up to 8 cm accuracy achieved in our simulation environment of 6x6 m, even in high signal noise of 4dBm. The CDOA-EM method provided the best accuracy among all methods, while the I-RSSI approach had high localization accuracy among the SOTA algorithms. Furthermore, in comparison to efficiency, CDOA-PF is 40\% more efficient than CDOA-EM in simulations, as expected. 
\textcolor{black}{SBL-DOA demonstrate comparable localization accuracy than proposed approach but is 10\% less efficient than CDOA-PF.}
It can be seen that the weighted centroid has the least accuracy of 84\%, and the proposed approach has the highest accuracy of 92\% compared to the ground truth location.
Also, as expected, the insider and diagonal trajectories provide better accuracies than the boundary cases for all the compared methods. The RSSI and DOA-based location estimates can be ambiguous on the boundary regions, resulting in higher localization errors. 

The PF-EKF approach used the same number of particle filters and applied an EKF to predict and update the robot's state while using raw RSSI values (which are inconsistent and unreliable \cite{dong2012reliability}) as an observation, resulting in 30\% reduced position estimation accuracy than our approach.
In addition, the EKF update and prediction step in PF-EKF added overhead in the computation and made it complex, while the proposed CDOA-PF has no such computationally expensive operation; hence time per iteration of CDOA-PF is 60\% less than that of PF-EKF. Overall, the proposed CDOA-PF achieved a balance of high localization accuracy and efficiency over all other SOTA algorithms.

\subsubsection{Ablation Analysis}
In the proposed approach, certain factors, such as the number of particles and noise level in RSSI measurements, can impact localization accuracy. 

We analyze the localization accuracy (RMSE) for different simulated noise levels in the measured RSSI values. 
It can be seen in Fig.~\ref{fig:simulation_noise_performance_plot} that the proposed approaches have lower RMSE (high accuracy) among all techniques, even under high noise levels. The accuracy improvement is more pronounced when the noise level is increased. The trilateration approach performed better than the Weighted Centroid method. However, both have 3x lower accuracy than the proposed EM and PF-based methods for most experiments. CDOA-EM performed slightly better than CDOA-PF in terms of accuracy and robustness. However, the CDOA-EM approach is computationally complex as it calculates the Gaussian probability in all the grid areas with a full resolution instead of sparsely and randomly distributed particles in the CDOA-PF.

\subsection{Real-world Public Datasets}
In general, our method outperformed other methods in both datasets, as can be seen in Table~\ref{tab:results}. Further analysis of individual technologies and channels is provided below.

\begin{figure}[t]
    \centering
    \includegraphics[width=0.98\linewidth]{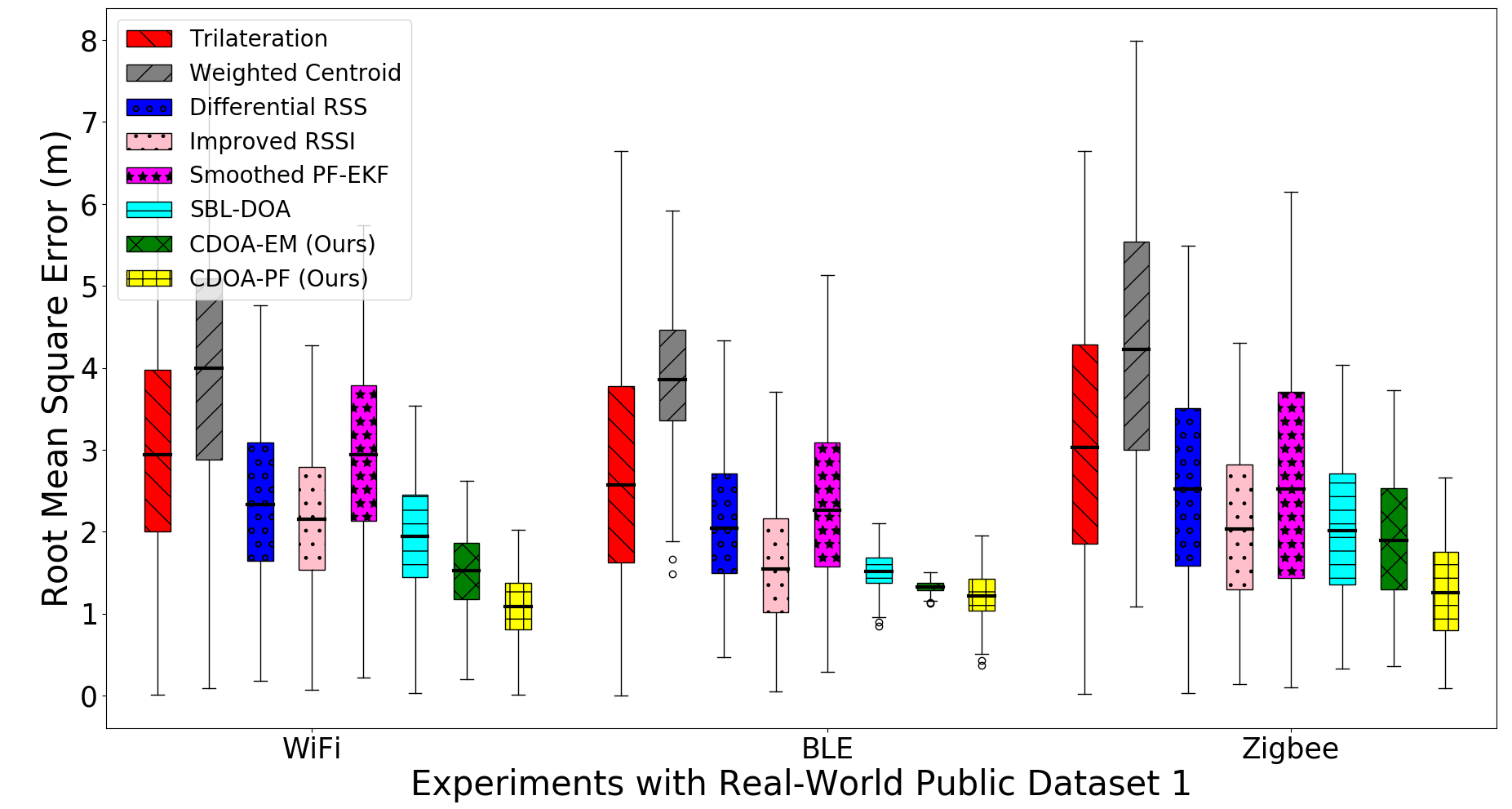}
    \vspace{-3mm}
    \caption{Comparative localization performance on the real-world dataset 1.}
     \label{fig:dataset1_result}
\end{figure}

\begin{figure}[t]
    \centering
   \includegraphics[width=0.98\linewidth]{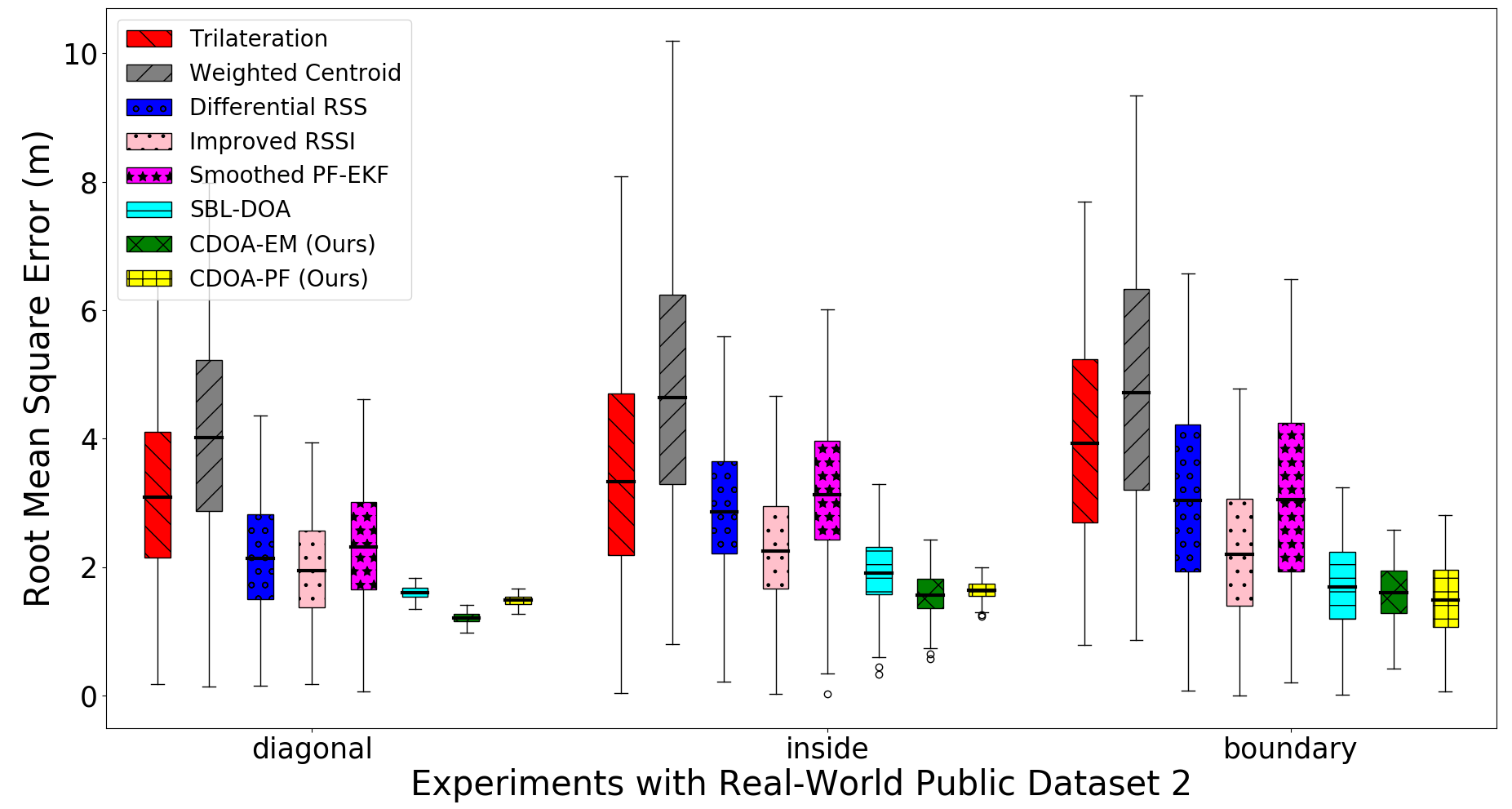}
      \vspace{-3mm}
    \caption{Comparative localization performance on the real-world dataset 2.}
     \label{fig:dataset2_result}
\end{figure}

\textbf{Dataset 1: Wireless technology comparison}
Dataset 1 captures RSSI observations from three technologies (Wi-Fi, BLE, and Zigbee). The findings from these experiments (Fig.~\ref{fig:dataset1_result}) delineate the suitability of using Wi-Fi as a communication channel and the proposed approach for indoor localization, as it has the least RMSE than other technologies, with 1.07m RMSE in a 4m x 4m of bounded region. As expected, there is no significant difference in computational complexity among different technologies because the methods take the same time to run the algorithm, irrespective of where the signals are coming from. However, CDOA-EM requires high time per iteration for position prediction than any other method. 
The proposed CDOA approach also has 0.4 m better accuracy than the KNN as presented in \cite{sadowski2020memoryless} scenario 1 of Dataset 1. \textcolor{black}{SBL-DOA used a similar DOA estimation technique and achieved relatively higher localization accuracy than other SOTA approaches but slightly less than the proposed CDOA-EM/PF.}
The CDOA-PF provided higher accuracy than the CDOA-EM, contrary to the expectation that EM provides better PF accuracy. We believe this is because the softmax-based convergence of a few particles in PF after several iterations resulted in the closest position estimate rather than a more significant number of grids converging in the EM. This is especially the case for certain positions where the bounding centroid of grids is very close to the actual location, which yields lower RMSE in CDOA-PF for longer trajectories, as found in this dataset.

\textbf{Dataset 2: Regional and Channel-wise comparison}
Dataset 2 has RSSI captured in three regions (inside, diagonal, and boundary) in a bounded size 10 x 10 m.  All the localization methods work well for the positions inside the AP/WSN perimeter, compared to diagonal or boundary points, similar to the results found in the simulations. However, the robot node would generally be inside the infrastructure boundaries, exploiting the full advantages of the proposed CDOA-based localization methods. Accordingly, the CDOA methods provided the best accuracy compared to other methods.

Dataset 2 also has RSSI observations for 40 different channels under three regions in the bounded dimension 10 x 10 m.
We show the results for the combined RMSE for channels 0-39 in Fig.~\ref{fig:dataset2_result}. It can be seen from Table.~\ref{tab:results} that the proposed approach consistently provided the best performance in most of the scenarios with reasonable computational efficiency. We also analyzed the individual channel performance and found that the comparative results did not differ significantly compared to the averaged channel estimates. However, some channels were found to have higher noise and, therefore, poorer performance on all the compared methods. \textcolor{black}{SBL-DOA approach over different channels performed more or less the same as the proposed approach because the SB-DOA relied on the Bayesian learning to mitigate the effect of channel variations, but at the cost of high computational cost, which makes it less practical in the robotics domain.}

\begin{figure}[t]
    \centering
   \includegraphics[width=0.98\linewidth]{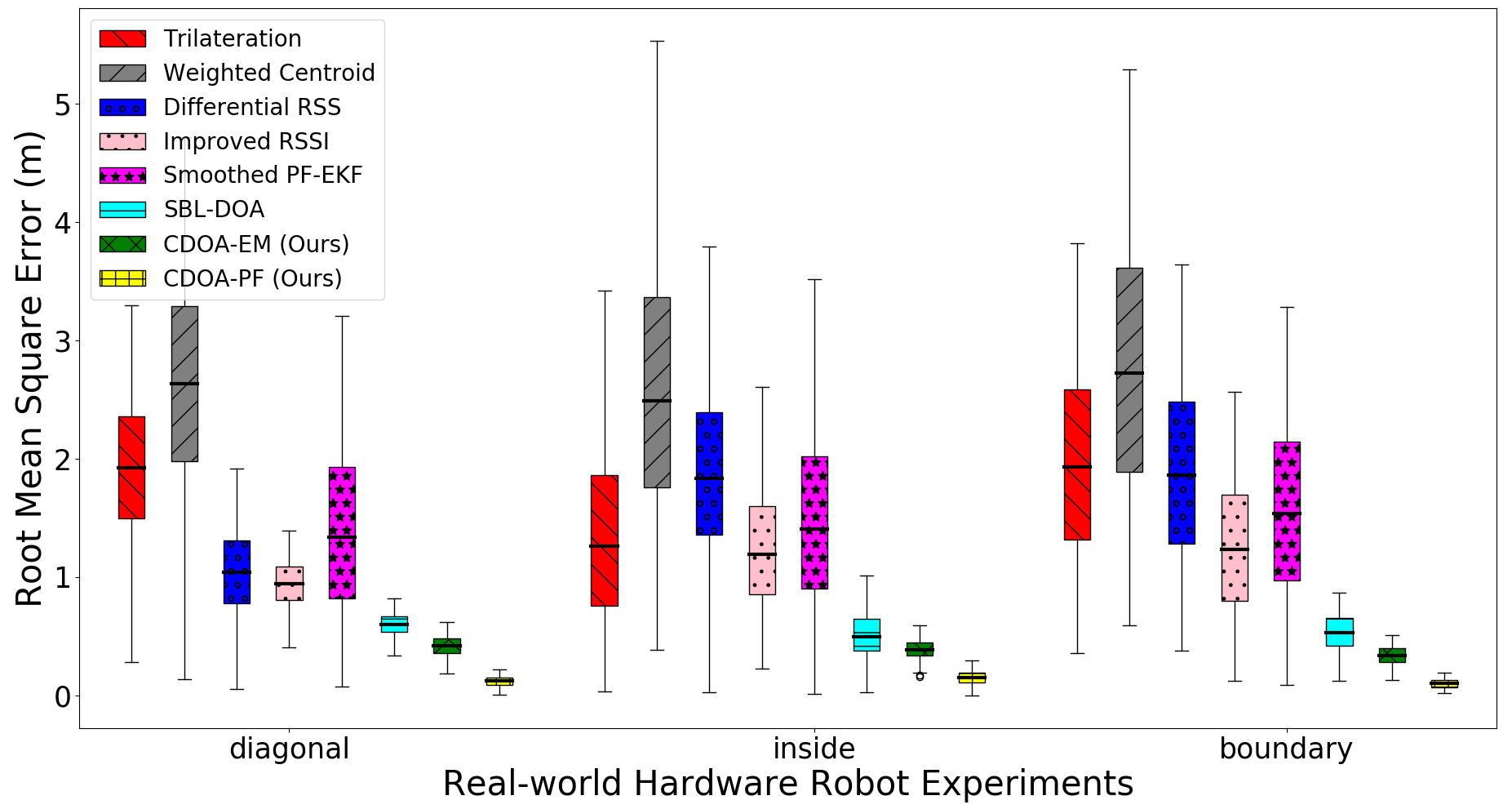}
   \vspace{-3mm}
    \caption{Comparative localization performance on the real robot experiments.}
     \label{fig:hardware_result}
\end{figure}

\subsection{Real Robot Hardware Experiments}
\label{sec:hardware-results}
Fig.~\ref{fig:hardware_result} provides the averaged results of the real robot experiments in different regions.
The results delineate high average localization accuracy of 95\%, with an absolute RMSE of 0.12 m (in a bounded region of $4 m^2$) for CDOA methods in our hardware experiments.
A sample of the hardware experiments and demonstration can be viewed at \url{https://www.youtube.com/watch?v=jVg2hzouO9E}. \textcolor{black}{As compared to other SOTA; SBL-DOA performed better and has shown 90\% localization accuracy but 5\% less accurate than the proposed CDOA method.}

Table~\ref{tab:results} also provides comparative numerical values of localization error and time per iteration for other SOTA approaches, which validate the significant performance improvement achieved by the proposed CDOA methods in all trajectories. CDOA-EM has relatively better localization than different approaches but with a high computational cost. The CDOA-PF has more than 93\% localization compared to other model-based approaches, confirming the simulation experiments. Interestingly, the CDOA-PF is 35\% more accurate than CDOA-EM, similar to the observation in dataset 1.

Furthermore, the hardware experiments also validated the practicality of the proposed CDOA approach by providing evidence of high localization accuracy, similar to simulation and dataset results.
Finally, we have made a hardware experiments dataset and the software source codes of the implementations available for the research community to reproduce the results and build on the work to improve DOA-based localization approaches further. 
\textcolor{black}{The proposed CDOA-PF and CDOA-EM approaches, given their impressive localization accuracy in both simulated and real-world environments, should operate effectively even in non-line-of-sight (NLOS) scenarios. These methods, especially the CDOA-PF, combine robustness to variations in the signal environment with computational efficiency, making them likely to handle NLOS conditions where signals may be blocked or distorted. Moreover, as these methods have shown superior performance in high signal noise scenarios, they should be robust to the increased noise and multipath effects commonly found in NLOS situations.}

\textit{Limitations:} Similar to any wireless node collaboration-based approach \cite{wang2019assistant}, the proposed approach only works for more than three fixed nodes placed at geometrically-aligned positions of a regular polygon bounded region to obtain accurate CDOA. While it is robust for most scenarios, it depends on the quality of the RSSI and the obstacles or non-line of sight conditions, which need to be studied further.

\section{Conclusion}
We proposed a novel collaborative direction of arrival (CDOA) based instantaneous localization method for robotic wireless nodes that can collaborate with other static wireless/IoT nodes in a GPS-denied environment. The proposed CDOA method efficiently incorporates wireless signals from multiple sources for position estimation. The received RSSI is used to determine the Gaussian probability of wireless signal CDOA through the geometric properties of the wireless sensor nodes. Two Bayesian approaches (Expectation Maximization and Particle Filter) use the CDOA probability density function and best fit (high weight and less probability noise) for the robot node state (position) estimation. 

Extensive simulation experiments, as well as real-world datasets and demonstrations, have revealed promising results on the method's accuracy and efficiency. 
Our method revealed at least a 50\% improvement in localization accuracy than other non-sampling-based methods such as trilateration, weighted centroid, etc., using RSSI-only data, and 20\% better than sampling-based techniques. Furthermore, despite CDOA being a sample-based technique, due to the less computational cost of CDOA and less frequent resampling, it is more efficient than other sampling and non-sampling-based methods. The experiments proved the practicality of the approach for cooperative robot localization to achieve high accuracy with low computational costs.

\bibliography{ehsan_bib,pfdoa_bib}
\bibliographystyle{IEEEtran}

\clearpage

\appendices

\section{Compared State-of-the-Art Methods}
\label{sec:appendix-methods}
Below are the specific details of the methods implemented from the state-of-the-art fingerprint-less localization techniques so as to compare with the proposed CDOA methods.

\subsection{Trilateration \cite{fundamentalTrilateration1996} }
Trilateration is a historical model-based technique \cite{fundamentalTrilateration1996} that uses distances to determine the receiver's location numerically. To calculate with trilateration, we need three transmitting devices to obtain a 2-D position and four to find a 3-D position. The distances between the transmitter and the receivers and the right number of transmitting devices are necessary. A frequent method for calculating the distance between devices is to use the RSSI of a signal. 
For 2-D space, with three anchor nodes $N_1,N_2,N_3$ and positions in space be $(a_1,b_1),(a_2,b_2),(a_3,b_3)$ respectively. We can find the unknown position $(x,y)$ of the receiver as:
 \begin{align*}&\begin{cases} \displaystyle (a_{1}-x)^{2}+(b_{1}-y)^{2}=d_{1}^{2} \\ \displaystyle (a_{2}-x)^{2}+(b_{2}-y)^{2}=d_{2}^{2} \\ \displaystyle (a_{3}-x)^{2}+(b_{3}-y)^{2}=d_{3}^{2} \end{cases} 
\end{align*}
To minimize the posting error, we need to minimize the following objective function using a non-linear least squares technique:
\begin{equation*} f(x,y)=\sum _{i=1}^{3}\left [{\sqrt {(x-a_{i})^{2}+(y-b_{i})^{2}}-d_{i}}\right]^{2}\end{equation*}

\subsection{Weighted Centroid \cite{weightedCentroid2014survey} }
The basic idea of a weighted centroid localization algorithm (e.g., \cite{weightedCentroid2014survey}) based on RSSI is that unknown nodes gather RSSI information from the beacon nodes around them.
Assuming there are n anchor nodes in the wireless sensor network, with coordinates $(x_1, y_1)$, $(x_2, y_2),. . .,(x_n, y_n)$, respectively, the location of the unknown node can be obtained by using the improved centroid algorithm estimating the coordinates of $n$ nodes as:
\begin{align*} &\begin{cases} x=\frac{{w_{1}}^{\ast}{x_{1}}^{+}{w_{2}}^{\ast}x_{2}{w_{3}}^{\ast}{x_{3} }^{+\ldots+}{w_{n}}^{\ast}x_{n}} {{w_{1}}^{+}{w_{1}}^{+}{w_{1}}^{+\ldots+}w_{1}}\\ y=\frac{{w_{1}}^{\ast}{y_{1}}^{+}{w_{2}}^{\ast}y_{2}{w_{3}}^{\ast}{y_{3} }^{+\ldots+}{w_{n}}^{\ast}y_{n}} {{w_{1}}^{+}{w_{1}}^{+}{w_{1}}^{+\ldots+}w_{1}}\\ \end{cases}\\ &w_{i}=\frac{RSSI_{i}}{RSSI_{1}+RSSI_{2}+RSSI_{3}+\ldots+RSSI_{n}}\\ &i\in(1,2,3,\ldots,n)\end{align*}

\subsection{Differential RSS \cite{podevijn2018comparison} }
The Differential RSS method in \cite{podevijn2018comparison} works without knowing to transmit power beforehand. There are two phases in this technique; offline and online phases. During the offline phase, received RSS values are generated using the representative (measured) RSS model for each grid point. During the online phase, the measured DRSS values for each grid point are compared to the theoretical ones. The estimated location (X, Y) is determined as the grid point with the theoretical RSS values closest (the least squares) to the ones measured:
\begin{equation*} (X,Y)=\min_{x,y}\sum_{i=1}^{N}(DRSS_{(x,y),i,T}-DRSS_{i,M})^{2} \end{equation*}
Here $DRSS_{(x,y), I, T}$ denotes the actual differential RSS value at position (x,y) from or at anchor $i$, $i=1 ... N$ with N is the number of anchors.$RSS_{i, M}$ is the measured RSS value from or at anchor $i$ and is therefore required. $DRSS_0$ is obtained from a measurement at a reference point using the below equation.
\begin{equation*} DRSS_{i}=RSS_{i}-RSS_{1} \end{equation*}
Here, $RSS_1$ denotes the most significant received signal strength. In the algorithm, $n$ is considered constant and known.

\subsection{Improved RSSI \cite{Xue2017improvedrssi} }
The mean or probability of the RSSI signal uses to determine the RSSI signal characteristics. However, because of multipath and non-line-of-sight propagation in a complex and dynamic interior environment, RSSI characteristics can vary significantly in time and space. For example, if multipath interference is higher than the signal, the probability distribution is right-skewed. As a result, the RSSI signal's mean does not adequately represent RSSI's dynamic nature. Therefore, the improved RSSI algorithm \cite{Xue2017improvedrssi} works to bag RSSIs for a particular time interval to maintain the temporal correlation between observations, then extract top $k$ values from decidedly sorted observations (value of $k$ determined to be 13 for indoor localization). Later on, average out the extracted observation to find the finest RSSI will then be used to calculate the differential distance from the previous observation using the differential of standard signal intensity attenuation model as:
\begin{equation*}
    \Delta d = d_o 10^{\frac{A-R_{t}}{10\eta}} - d_o 10^{\frac{A-R_{t-1}}{10\eta}},
\end{equation*}
where $d_o$ set to one meter, $A$ is received RSSI at $d_o$, $R_{t}$ and $R_{t-1}$ are the processed RSSI at time $t$ and $t-1$ respectively.
For localization, an initial position is (0, 0), and subsequent positions are estimated using the intersection of $\Delta d$ measured from each access point using standard rules of trilateration.

\subsection{Particle Filter Extended Kalman Filter (PF-EKF) \cite{zafari2018baysianfiltering} }
For indoor localization, Bayesian filtering is an appealing solution. Which, on the other hand, requires the system(depicts how the state changes over time.) and measurement (relates the noisy measurements (RSSI for PF and the user position for EKF) with the state/position) models. Using a recursive filter, the PF-EKF algorithm in \cite{zafari2018baysianfiltering} calculates the posterior Probability Density Function (PDF). The prediction and update stages of recursive filters are where the state predicts and then updates once the measurements are available. Then, using the Bayes theorem, the updated state has gathered measurements to adjust the prediction PDF. 
The algorithm first uses a particle filter to find the estimated PDF in weighted random samples as:
\begin{equation*}
 p(y_{i}\vert x_{1:i}) \approx\sum\limits_{k=1}^{N_{s}}w_{i}^{k}\delta(y_{i}-y_{i}^{k}) 
\end{equation*}
Furthermore, we used this PDF in an extended Kalman filter to get the prediction of position and update the measurement model for future estimation as:\\
\texttt{Predict:}
\begin{gather*}
\bar{Y}_{i-1}= FY_{i-1} \\
\bar{P}_{i-1}= FP_{i-1}F^{T}+Q
\end{gather*}
\texttt{Update:}
\begin{gather*}
K_{i}=\bar{P}_{i-1}H^{T}(H\bar{P}_{i-1}H^{T}+R)^{-1}\\
\bar{Y}_{i}=\bar{Y}_{i-1}+ K_{i}(X_{i}-H\bar{Y}_{i-1})\\
P_{i}=\bar{P}_{i-1}(1-KH)
\end{gather*}

{\color{black}
\subsection{SBL-DOA \cite{wang2019assistant}.}
The proposed method in \cite{wang2019assistant} focuses on assisted vehicle localization based on three collaborative base stations and SBL-based robust DOA estimation. To accurately estimate the direction of arrival (DOA) for vehicle localization, the authors create the sparse Bayesian learning (SBL) technique. The main concept is to use three base stations for cooperative localization and to take advantage of the previously known sparsity in the angular space. 
The SBL-based robust DOA estimation problem is formulated as:
\begin{equation*}
\begin{aligned}
& \underset{\boldsymbol{\theta}, \boldsymbol{\alpha}}{\text{minimize}}
& & \frac{1}{2} |\mathbf{y} - \mathbf{A}(\boldsymbol{\theta})\mathbf{x}|{2}^{2} + \frac{\gamma}{2} |\mathbf{x}|{1} \\
& \text{subject to}
& & \alpha_{i} = \frac{1}{\sigma_{i}^{2}}, \quad i = 1, \ldots, N,
\end{aligned}
\end{equation*}
where $\mathbf{y}$ represents the received signal, $\mathbf{A}(\boldsymbol{\theta})$ denotes the steering matrix, $\mathbf{x}$ is the sparse vector of interest, $\boldsymbol{\alpha}$ is the hyperparameter vector, and $\gamma$ is the regularization parameter.

After obtaining the DOA estimates, the proposed method triangulates the robot's location using the base stations. For a system with three base stations, the vehicle's position $(x, y)$ is calculated using the following set of equations:
\begin{equation*}
\begin{aligned}
& d_{1} = \sqrt{(x - x_{1})^{2} + (y - y_{1})^{2}} \\
& d_{2} = \sqrt{(x - x_{2})^{2} + (y - y_{2})^{2}} \\
& d_{3} = \sqrt{(x - x_{3})^{2} + (y - y_{3})^{2}},
\end{aligned}
\end{equation*}
where $(x_{i}, y_{i})$ denotes the position of the $i$-th base station and $d_{i}$ represents the distance between the vehicle and the $i$-th base station.

By applying the SBL-based robust DOA estimation and triangulation, the proposed approach in \cite{wang2019assistant} achieves accurate and reliable vehicle localization in the presence of multipath and shadowing effects commonly encountered in urban environments.

}

\begin{figure}[ht]
\centering
\includegraphics[width=0.75\linewidth]{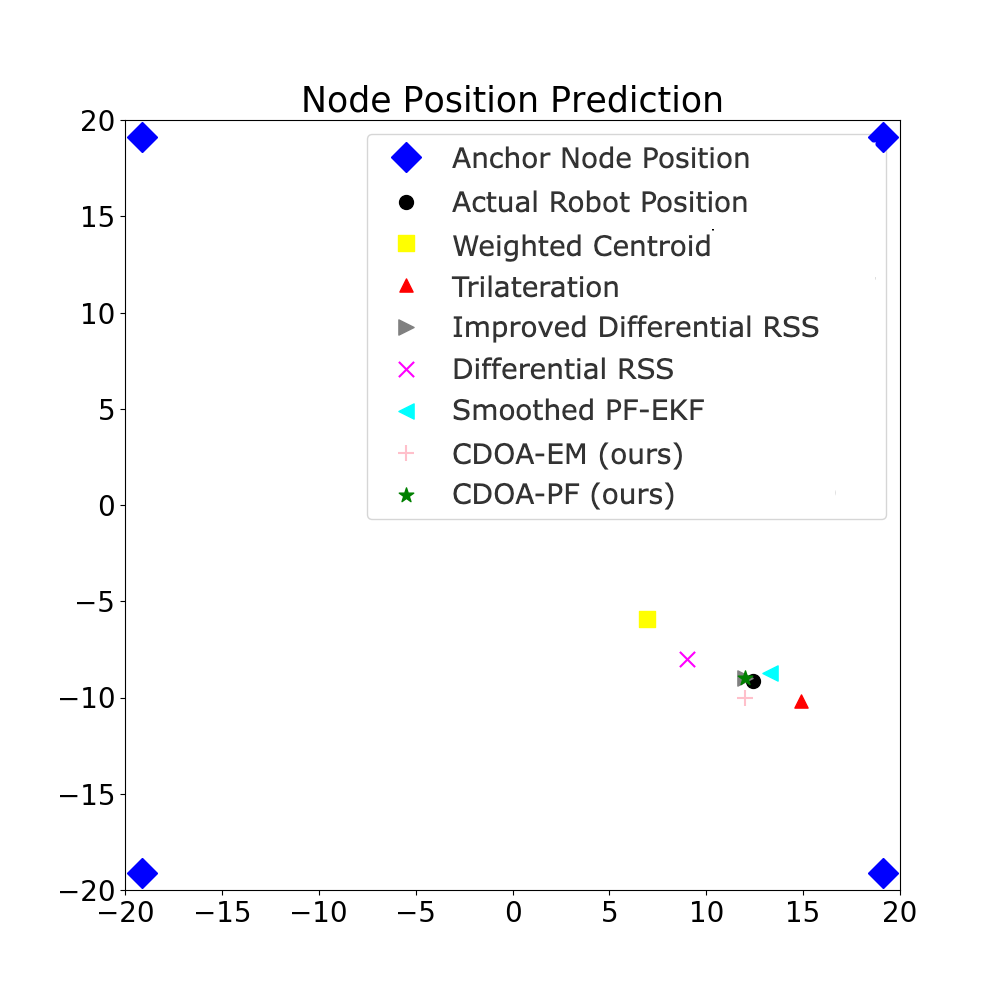}
\vspace{-4mm}
\caption{A sample output comparison of different localization algorithms in a 40 x 40 m bounded region.}
 \label{fig:scatter}
\end{figure}

\subsection{Illustration of a localization example}
We plotted the position predictions in a large simulated area of 40 m x 40 m bounded region to further understand how our method localizes compared to other methods. Fig.~\ref{fig:scatter} visualizes the localization accuracy for all methods for a specific sample location. We can observe that the proposed methods predicted robot node location very close to the actual location, even in a large area.

\subsection{Ablation study on the number of particles in CDOA-PF}
We present an ablation analysis of the number of particles in the proposed CDOA-PF method since it is a sampling-based method depending heavily on the number of particles. 
Fig.~\ref{fig:sim-particles} presents the results of the RMSE variations under different numbers of particles in the proposed CDOA-PF approach. The accuracy has improved by increasing the number of particles, but not so much beyond a certain optimum value. Also, more particles increased the computational complexity. Implementing the motion model over PF significantly improved the accuracy at a small expense in the computation time. The results demonstrate the potential of integrating the mobile robot's motion model (or IMU sensor), when possible, to improve the CDOA-based localization solution.

\begin{figure}[t]
\centering
 \includegraphics[width=0.75\linewidth]{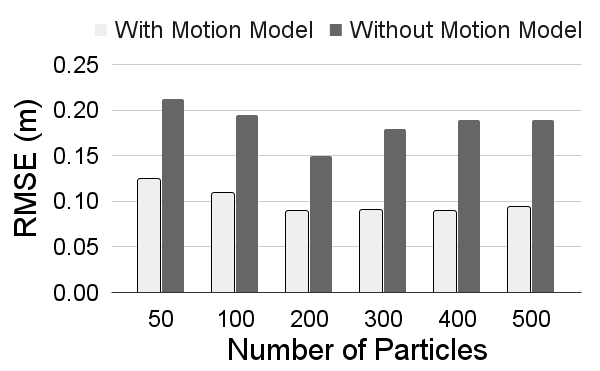}
  \vspace{-2mm}
\caption{Impact on the localization accuracy of CDOA-PF by the number of particles and inclusion of robot motion model (odometry).}
 \label{fig:sim-particles}
\end{figure}

\end{document}